\newcommand{\postupdate}[1]{{#1}}
\newcommand{\cD}{\mathcal{D}}
\newcommand{\cS}{\mathcal{S}}
\newcommand{\cX}{\mathcal{X}}
\newcommand{\cA}{\mathcal{A}}
\newcommand{\cR}{\mathcal{R}}
\newcommand{\bR}{\mathbb{R}}
\newcommand{\bE}{\mathbb{E}}
\newcommand{\policy}{{\bm{\pi}}}
\newcommand{\Vpi}{V^{\pi}}
\newcommand{\Ppi}{P^{\pi}}
\newcommand{\traj}{\tau}
\newcommand{\horizon}{H}
\newcommand{\world}{{\bm{M}}}
\newcommand{\model}{{\widehat{\bm{M}}}}
\newcommand{\Preal}{{P}}
\newcommand{\Plearned}{{\widehat{P}}}
\newcommand{\loss}{\ell}
\newcommand{\perf}{J}
\newcommand{\mutil}{{\tilde{\mu}}}
\newcommand{\x}{{\bm{\theta}}}
\newcommand{\f}{{\mathcal{L}}}
\newcommand{\deriv}{{\mathrm{d}}}
\newtheorem{theorem}{Theorem}
\newtheorem{lemma}{Lemma}
\newtheorem{definition}{Definition}
\DeclarePairedDelimiter\abs{\lvert}{\rvert}%
\DeclarePairedDelimiter\norm{\lVert}{\rVert}%
\let\oldabs\abs
\def\abs{\@ifstar{\oldabs}{\oldabs*}}
\let\oldnorm\norm
\def\norm{\@ifstar{\oldnorm}{\oldnorm*}}
\newcounter{subroutine}
\newenvironment{subroutine}[1][htb]{%
  \let\c@algorithm\c@subroutine
  \renewcommand{\ALG@name}{Subroutine}%
  \begin{algorithm}[#1]%
  }{\end{algorithm}
}
\begin{document}

\twocolumn[
\icmltitle{A Game Theoretic Framework for Model Based Reinforcement Learning}

\begin{icmlauthorlist}
\icmlauthor{Aravind Rajeswaran}{google,uw}
\icmlauthor{Igor Mordatch}{google}
\icmlauthor{Vikash Kumar}{google}
\end{icmlauthorlist}

\icmlaffiliation{google}{Google Brain, Mountain View, USA.}
\icmlaffiliation{uw}{University of Washington, Seattle, USA. Work performed at Google Brain.}

\icmlcorrespondingauthor{Aravind Rajeswaran}{aravraj@cs.washington.edu}

\icmlkeywords{Model-Based Reinforcement Learning}

\vskip 0.3in
]

\printAffiliationsAndNotice{} 

\begin{abstract}
Designing stable and efficient algorithms for model-based reinforcement learning (MBRL) with function approximation has remained challenging despite growing interest in the field. To help expose the practical challenges in MBRL and simplify algorithm design from the lens of abstraction, we develop a new framework that casts MBRL as a game between: (1)~a policy player, which attempts to maximize rewards under the learned model; (2)~a model player, which attempts to fit the real-world data collected by the policy player. We show that a near-optimal policy for the environment can be obtained by finding an approximate equilibrium for aforementioned game, and we develop two families of algorithms to find the game equilibrium by drawing upon ideas from Stackelberg games. Experimental studies suggest that the proposed algorithms achieve state of the art sample efficiency, match the asymptotic performance of model-free policy gradient, and scale gracefully to high-dimensional tasks like dexterous hand manipulation. Project page: \url{https://sites.google.com/view/mbrl-game}.
\end{abstract}

\section{Introduction}
\label{sec:intro}

We study the problem of model-based reinforcement learning (MBRL) where a world model is learned from data to aid policy search. Model-based algorithms can incorporate historical off-policy data and generic priors like knowledge of physics, making them highly sample efficient. In addition, the learned models can also be re-purposed to solve new tasks. As a result, there has been a recent surge of interest in MBRL. However, a clear algorithmic framework to understand MBRL and unify insights from recent works has been lacking. To bridge this gap, and to facilitate the design of stable and efficient algorithms, we develop a new framework that casts MBRL as a two-player game.

Classical frameworks for MBRL, adaptive control~\cite{AstromBook}, and dynamic programming~\cite{Puterman1994MarkovDP}, are often confined to simple linear models or tabular representations. They also rely on building global models through ideas like persistent excitation~\cite{Narendra1987PersistentEI} or tabular generative models~\cite{Kearns1998FiniteSampleCR}. Such settings and assumptions are often limiting for modern applications.
To obtain a globally accurate model, we need the ability to collect data from all parts of the state space~\cite{Agarwal2019OnTO}, which is often impossible. Furthermore, learning globally accurate models may be unnecessary, unsafe, and inefficient. For example, to make an autonomous car drive on the road, we should not require accurate models in situations where it tumbles and crashes in different ways. This motivates a class of {\em incremental} methods for MBRL that interleave policy and model learning to gradually construct and refine models in the task-relevant parts of the state space. This is in sharp contrast to a two-stage approach of first building a model of the world, and subsequently planning in it.

A unifying framework for incremental MBRL can connect insights from different approaches and help simplify the algorithm design process from the lens of abstraction. As an example, {\em distribution} or {\em domain shift} is known to be a major challenge for incremental MBRL. When improving the policy using the learned model, the policy will attempt to shift the distribution over visited states. 
The learned model may be inaccurate for this modified distribution, resulting in a greatly biased policy update. A variety of approaches have been developed to mitigate this issue. One class of approaches~\cite{GPS_unknown_model, DPI, Kakade2002CPI}, inspired by trust region methods, make conservative changes to the policy to constrain the distribution between successive iterates. In sharp contrast, an alternate set of approaches do not constrain the policy updates in any way, but instead rely on data aggregation to mitigate distribution shift~\cite{Ross2012AgnosticSI, PETS, PDDM}. Our game-theoretic framework for MBRL reveals that these two seemingly disparate approaches are essentially dual approaches to solve the same game. 

\newpage
{\bf Our Contributions:}
\vspace*{-10pt}
\begin{enumerate}[leftmargin=*]
    \itemsep0em
    \item We develop a novel framework that casts MBRL as a game between: (a) a {\em policy player}, which maximizes rewards in the learned model; and (b) a {\em model player}, which minimizes prediction error of data collected by policy player. Theoretically, we establish that at equilibrium, the policy is near-optimal for the environment.
    
    \item Developing learning algorithms for general continuous games is well known to be challenging. To develop stable and convergent algorithms, we setup a {\em Stackelberg game}~\cite{Stackelberg2010MarketSA} between the two players, which can be solved efficiently through (approximate) bi-level optimization. 
    \item Stackelberg games are asymmetric games where players make decisions in a pre-specified order. The leader plays first and subsequently the follower. Due to the asymmetric nature, the MBRL game can take two forms depending on choice of leader player. This gives rise to two natural families of algorithms that have complementary strengths. Together, they unify and generalize many prior MBRL algorithms.
    
    \item Experimentally, we show that our algorithms outperform prior model-based and model-free algorithms in sample efficiency; match the asymptotic performance of model-free policy gradient algorithms; and scale gracefully to high-dimensional tasks like dexterous manipulation.
\end{enumerate}

\section{Background and Notations}
\label{sec:mdp}

We treat the environment as an infinite horizon MDP characterized by: \hbox{$\world = \lbrace \cS, \cA, \cR, \Preal, \gamma, \rho \rbrace$}. Per usual notation, $\cS \subseteq \bR^n$ and $\cA \subseteq \bR^m$ represent the continuous state and action spaces. The transition dynamics is described by $s' \sim \Preal(\cdot|s,a)$. $\cR: \cS \rightarrow [0, R_{\max}]$ , $\gamma \in [0, 1)$, and $\rho$ represents the reward, discount, and initial state distribution respectively. Policy is a mapping from states to a probability distribution over actions, i.e. $\policy : \cS \rightarrow P(\cA)$, and in practice we typically consider parameterized policies. The goal is to optimize the objective:
\begin{equation}
    \label{eq:policy_opt}
    \max_{\policy} \ \perf(\policy, \world) := \bE_{\world, \policy} \left[ \sum_{t=0}^\infty \gamma^t \cR(s_t) \right]
\end{equation}
Model-free methods solve this optimization by directly estimating a gradient using collected samples or through value functions. Model-based methods, in contrast, construct an explicit world model to aid policy optimization. 

\subsection{Model-Based Reinforcement Learning}
\label{sec:mbrl}
We represent the world model with another tuple: \hbox{$\model = \lbrace \cS, \cA, \cR, \Plearned, \gamma, \rho \rbrace$}. The model has the same state-action space, reward function, discount, and initial state distribution. We parameterize the transition dynamics of the model $\Plearned$ (as a neural network) and learn the parameters so that it approximates the environment transition dynamics, i.e. $\Plearned \approx \Preal$. For simplicity, we assume that the reward function and initial state distribution are known. This is a benign assumption for many applications in control, robotics, and operations research. If required, these quantities can also be learned from data, and are typically easier to learn than $\Plearned$. Enormous quantities of experience can be cheaply generated by simulating the model, without interacting with the world, and can be used for policy optimization. Thus, model-based methods tend to be sample efficient. 

{\bf Idealized Global Model Setting} To motivate challenges in MBRL, we first consider the idealized setting of an approximate {\em global} model. This corresponds to the case where $\model$ is sufficiently expressive and approximates $\world$ everywhere. Lemma~\ref{global_sim_lemma} relates the performance of a policy in the model and environment.
\begin{lemma}
\label{global_sim_lemma}
(Simulation Lemma) Suppose $\model$ is such that \hbox{$D_{TV}\left( \Preal(\cdot|s,a), \Plearned(\cdot|s,a) \right) \leq \epsilon_\world \ \forall (s,a)$}. Then, for any policy $\policy$, we have
\begin{equation}
\left| \perf(\policy, \world) - \perf(\policy, \model)  \right| \leq  O \left( \frac{\epsilon_\world}{(1-\gamma)^2} \right) \ \ \forall \policy.
\end{equation}
\end{lemma}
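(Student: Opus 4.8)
The plan is to reduce the statement to a comparison of value functions and then exploit a self-referential (contraction) inequality in the sup-norm. First I would introduce the state value functions $V^{\policy}_{\world}$ and $V^{\policy}_{\model}$ for $\policy$ in the world and in the model, defined by the usual Bellman equations
\begin{equation}
V^{\policy}_{\world}(s) = \cR(s) + \gamma\, \bE_{a \sim \policy(\cdot|s),\, s' \sim P_\world(\cdot|s,a)}\!\left[ V^{\policy}_{\world}(s') \right],
\end{equation}
and identically for $\model$ with $P_\model$ replacing $P_\world$. Since $\cR$ takes values in $[0,R_{\max}]$, both are uniformly bounded, $0 \le V^{\policy}_{\world}(s), V^{\policy}_{\model}(s) \le R_{\max}/(1-\gamma)$, and moreover $\perf(\policy, \world) = \bE_{s_0 \sim \rho}[V^{\policy}_{\world}(s_0)]$ with the analogous identity for $\model$, so it suffices to bound $\norm{ V^{\policy}_{\world} - V^{\policy}_{\model} }_\infty$.

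Next I would subtract the two Bellman equations at a fixed state $s$ and insert the auxiliary term $\bE_{s' \sim P_\model}[V^{\policy}_{\world}(s')]$:
\begin{align}
V^{\policy}_{\world}(s) - V^{\policy}_{\model}(s)
&= \gamma\, \bE_{a \sim \policy}\!\left[ \bE_{s' \sim P_\world}[V^{\policy}_{\world}(s')] - \bE_{s' \sim P_\model}[V^{\policy}_{\world}(s')] \right] \nonumber \\
&\quad + \gamma\, \bE_{a \sim \policy}\, \bE_{s' \sim P_\model}\!\left[ V^{\policy}_{\world}(s') - V^{\policy}_{\model}(s') \right].
\end{align}
The first bracket integrates the \emph{fixed} bounded function $V^{\policy}_{\world}$ against two distributions that are $\epsilon_\model$-close in total variation, so it is at most $2\norm{V^{\policy}_{\world}}_\infty \epsilon_\model \le 2R_{\max}\epsilon_\model/(1-\gamma)$ in magnitude; the second term is at most $\gamma \norm{V^{\policy}_{\world} - V^{\policy}_{\model}}_\infty$. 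Taking the supremum over $s$ yields
\begin{equation}
\norm{ V^{\policy}_{\world} - V^{\policy}_{\model} }_\infty \le \frac{2\gamma R_{\max}\, \epsilon_\model}{1-\gamma} + \gamma\, \norm{ V^{\policy}_{\world} - V^{\policy}_{\model} }_\infty .
\end{equation}
Since $\gamma < 1$, rearranging gives $\norm{ V^{\policy}_{\world} - V^{\policy}_{\model} }_\infty \le 2\gamma R_{\max}\epsilon_\model/(1-\gamma)^2 = O(\epsilon_\model/(1-\gamma)^2)$, and then $\perf(\policy, \world) - \perf(\policy, \model) = \bE_{s_0 \sim \rho}[V^{\policy}_{\world}(s_0) - V^{\policy}_{\model}(s_0)] \ge -\norm{V^{\policy}_{\world} - V^{\policy}_{\model}}_\infty$, which is the claim. (An equivalent route is to bound the total variation between the discounted state-visitation distributions $d^{\policy}_\world$ and $d^{\policy}_\model$ by $O(\epsilon_\model/(1-\gamma))$ via a stepwise coupling, then integrate $\cR$; this gives the same $O(\epsilon_\model/(1-\gamma)^2)$ after the $1/(1-\gamma)$ normalization of $\perf$.)

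The step requiring the most care is the add-and-subtract in the second display: the auxiliary term must be the \emph{world} value function integrated against the \emph{model} transition, so that one difference involves a fixed function (enabling the total-variation bound) while the other reproduces the value gap in contracted form — this is what makes the recursion close. Everything else is routine bookkeeping: uniform boundedness of the value functions and summing the resulting geometric series. There is no genuine obstacle; the only subtleties worth stating explicitly are that the bound, though written one-sided, is actually symmetric, and that the hypothesis being uniform over all $(s,a)$ is precisely what delivers a conclusion uniform over all $\policy$.
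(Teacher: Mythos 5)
Your proposal is correct and follows essentially the same argument as the paper's proof: subtract the Bellman equations, insert the auxiliary term $\bE_{s' \sim P^{\policy}_\model}\big[ V^{\policy}(s', \world) \big]$, bound one difference by total variation against the bounded world value function and the other by $\gamma$ times the sup-norm value gap, then rearrange the resulting contraction inequality and average over the initial state distribution. The constants ($2\gamma R_{\max}\epsilon_\model/(1-\gamma)^2$) match the paper's exactly.
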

\vspace*{-5pt}

The proof is provided in the appendix. Since Lemma~\ref{global_sim_lemma} provides a uniform bound applicable to all policies, we can expect good performance in the environment by optimizing the policy in the model, i.e. $\max_\policy \perf(\policy, \model)$.

{\bf Beyond global models} A global modeling approach as above is often impractical. To obtain a globally accurate model, we need the ability to collect data from all parts of the state space~\cite{Agarwal2019OnTO, pathakICMl17curiosity}, which can be difficult. More importantly, learning globally accurate models may be unnecessary, unsafe, and inefficient.
For example, to make a robot walk, we should not require accurate models in situations where it falls and crashes in different ways. This motivates the need for {\em incremental} MBRL, where models are gradually constructed and refined in the task-relevant parts of the state space. To formalize this intuition, we consider the below notion of model quality.
\begin{definition}
(Model approximation loss) Given $\model$ and distribution $\mu(s,a)$, the model approximation loss is
\begin{equation}
\loss(\model, \mu) = \bE_{(s,a) \sim \mu} \left[ D_{KL} \big( \Preal(\cdot|s,a) , \Plearned(\cdot|s,a) \big) \right].
\end{equation}
\end{definition}
\vspace*{-5pt}
We use $D_{KL}$ to refer to the KL divergence which can be optimized using samples from $\world$, and is closely related to $D_{TV}$ through Pinsker's inequality. In the case of isotropic Gaussian distributions, as typically considered in continuous control applications, $D_{KL}$ reduces to the familiar $\ell_2$ loss. Importantly, the loss is intimately tied to the sampling distribution $\mu$. In general, models that are accurate in some parts of the state space need not generalize/transfer to other parts. As a result, a more conservative policy learning procedure is required, in contrast to the global model case.

\section{Model Based RL as a Two Player Game}
\label{sec:game_mbrl}
In order to capture the interactions between model and policy learning, we formulate MBRL as the following two-player general sum game (ref. as MBRL game)
\begin{equation}
    \label{eq:game_mbrl}
    \overbrace{\max_\policy \ \perf(\policy, \model)}^{\mathrm{policy-player}} \ \ , \ \ \overbrace{\min_\model \ \loss(\model, \mu_\world^\policy)}^{\mathrm{model-player}}
    \vspace*{-5pt}
\end{equation}
We use $\mu_\world^\policy = \frac{1}{T} \sum_{t=0}^T P(s_t=s, a_t=a)$ to denote the average state visitation distribution. The policy player maximizes performance in the learned model, while the model player minimizes prediction error under policy player's induced state distribution. This is a game since the objective of each player depends on the parameters of both players.

The above formulation separates MBRL into the constituent components of policy learning (planning) and generative model learning. At the same time, it exposes that the two components are closely intertwined and must be considered together in order to succeed in MBRL. We discuss algorithms for solving the game in Section~\ref{sec:algorithms}, and first focus on the equilibrium properties of the MBRL game. Our results establish that at (approximate) Nash equilibrium of the MBRL game: (1) the model can accurately simulate and predict the performance of the policy; (2) the policy is near-optimal.

\begin{theorem}
\label{game_theorem}
(Global perf. of equilibrium pair; informal) Suppose we have a pair of policy and model, $(\policy, \model)$, such that simultaneously
\[
\loss(\model, \mu_\world^\policy) \leq \epsilon_\world \ \text{ and } \ \perf(\policy, \model) \geq J(\policy', \model) - \epsilon_\policy \ \forall \policy'.
\]
For an optimal policy $\policy^*$, we have
\begin{equation}
\begin{split}
& \perf (\policy^*, \world) - \perf(\policy, \world) \leq \\
& O \left( \epsilon_\policy + \frac{\sqrt{\epsilon_\world}}{(1-\gamma)^2} + \frac{1}{1-\gamma} D_{TV} \left(\mu_\world^{\policy^*}, \mu_\model^{\policy^*} \right) \right).
\end{split}
\end{equation}
\end{theorem}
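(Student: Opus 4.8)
The plan is a three-way telescoping of the suboptimality, inserting the model's evaluation of both $\policy^*$ and $\policy$:
\begin{align*}
\perf_\world^* - \perf(\policy, \world)
&= \underbrace{\big(J(\policy^*, \world) - J(\policy^*, \model)\big)}_{(\mathrm{I})} + \underbrace{\big(J(\policy^*, \model) - J(\policy, \model)\big)}_{(\mathrm{II})} \\
&\quad + \underbrace{\big(J(\policy, \model) - J(\policy, \world)\big)}_{(\mathrm{III})}.
\end{align*}
Term $(\mathrm{II})$ is handled immediately by the near-optimality hypothesis: $J(\policy^*, \model) \le \sup_{\policy'} J(\policy', \model) \le \perf(\policy, \model) + \epsilon_\policy$, so $(\mathrm{II}) \le \epsilon_\policy$. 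All the work is in the two ``simulation-gap'' terms $(\mathrm{I})$ and $(\mathrm{III})$, and the key structural point is that they must be attacked differently: the model-accuracy hypothesis controls error only under $\mu_\world^\policy$, the visitation of the equilibrium policy $\policy$ in the real world, and says nothing about the states that $\policy^*$ visits.

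For $(\mathrm{III})$ I would prove a local, distribution-dependent refinement of the Simulation Lemma~\ref{global_sim_lemma}: running the same telescoping that proves Lemma~\ref{global_sim_lemma}, but unrolling the recursion along the \emph{real} dynamics $P_\world$ and retaining the visitation distribution rather than taking a supremum over states, one obtains
$$J(\policy, \world) - J(\policy, \model) \;=\; \frac{\gamma}{1-\gamma}\, \bE_{(s,a)\sim \mu_\world^\policy}\!\Big[\,\big(P_\world(\cdot|s,a) - P_\model(\cdot|s,a)\big)^{\!\top} V^{\policy}_{\model}\,\Big],$$
so precisely the hypothesis distribution $\mu_\world^\policy$ appears. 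Using $\|V^{\policy}_{\model}\|_\infty \le R_{\max}/(1-\gamma)$, each integrand is at most $\tfrac{2R_{\max}}{1-\gamma}\,D_{TV}\!\big(P_\world(\cdot|s,a), P_\model(\cdot|s,a)\big)$; Pinsker's inequality followed by Jensen's inequality (concavity of $\sqrt{\cdot}$) then gives $\bE_{\mu_\world^\policy}[D_{TV}] \le \sqrt{\tfrac12\,\bE_{\mu_\world^\policy}[D_{KL}]} = \sqrt{\tfrac12\,\loss(\model,\mu_\world^\policy)} \le \sqrt{\epsilon_\model/2}$, hence $|(\mathrm{III})| = O\!\big(\sqrt{\epsilon_\model}/(1-\gamma)^2\big)$.

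For $(\mathrm{I})$ no model-error bound is available along $\policy^*$'s trajectories, so instead I would compare the two MDPs' occupancy measures directly. Since the reward depends on the state alone, $J(\policy^*, \world) = \tfrac{1}{1-\gamma}\,\bE_{s\sim d_\world^{\policy^*}}[\cR(s)]$ and $J(\policy^*, \model) = \tfrac{1}{1-\gamma}\,\bE_{s\sim d_\model^{\policy^*}}[\cR(s)]$, where $d_M^{\policy^*}$ is the (normalized, discounted) state-occupancy of $\policy^*$ in MDP $M$. Subtracting and using $\cR \in [0, R_{\max}]$,
$$(\mathrm{I}) \;=\; \tfrac{1}{1-\gamma}\big(\bE_{d_\world^{\policy^*}}[\cR] - \bE_{d_\model^{\policy^*}}[\cR]\big) \;\le\; \tfrac{R_{\max}}{1-\gamma}\,D_{TV}\!\big(d_\world^{\policy^*}, d_\model^{\policy^*}\big) \;\le\; \tfrac{R_{\max}}{1-\gamma}\,D_{TV}\!\big(\mu_\world^{\policy^*}, \mu_\model^{\policy^*}\big),$$
the last inequality because a state-marginal has total variation no larger than the underlying state--action distribution. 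Summing the three bounds and folding $R_{\max}$ and numerical constants into $O(\cdot)$ gives the claim.

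The main obstacle --- and the reason the third term in the bound is intrinsic rather than an artifact of the analysis --- is term $(\mathrm{I})$: a model fit only to $\policy$'s data cannot be trusted to simulate $\policy^*$, so one must pay the raw distribution-shift price $\tfrac{1}{1-\gamma}D_{TV}(\mu_\world^{\policy^*}, \mu_\model^{\policy^*})$ for the optimal policy's excursions into regions the model never saw. A secondary, purely bookkeeping point is that the telescoping in $(\mathrm{III})$ naturally produces the \emph{discounted} occupancy whereas the hypotheses are phrased with the averaged visitation $\mu$; reconciling the two changes only constants absorbed into $O(\cdot)$, and is deferred to the formal version in the appendix.
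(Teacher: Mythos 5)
Your proposal is correct and follows essentially the same route as the paper's proof: the identical three-term decomposition, the same one-line bound on $(\mathrm{II})$ from near-optimality in the model, the same Pinsker-plus-Jensen conversion of the KL hypothesis into a TV bound for $(\mathrm{III})$, and the same direct occupancy-measure comparison for $(\mathrm{I})$. The only cosmetic difference is that you bound $(\mathrm{III})$ via the one-shot telescoping identity over the discounted occupancy, whereas the paper unrolls the per-timestep marginal error (its Lemmas~\ref{lemma:error_amplification} and~\ref{lemma:performance_diff}) and sums an arithmetico--geometric series; the discounted-versus-averaged visitation mismatch you defer is resolved in the paper exactly as you anticipate, by strengthening the hypothesis to a per-timestep error bound in the formal statement.
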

\begin{proof}
A more formal version of the theorem and proof is provided in appendix~\ref{appendix:theory}.
\end{proof}
\vspace*{-5pt}
We now make some remarks about the above result.
\vspace*{-10pt}
\begin{enumerate}[leftmargin=*]
    \itemsep0em
    \item The first two terms are related to sub-optimality in policy optimization (planning) and model learning, and can be made small with more compute and data, assuming sufficient capacity. 
    \item There may be multiple Nash equilibrium for the MBRL game, and the third {\em domain adaptation} or {\em transfer learning} term in the bound captures the quality of an equilibrium. It captures the idea that model is trained under distribution of $\policy$, i.e. $\mu_\world^\policy$, but evaluated under the distribution of $\policy^*$, i.e. $\mu_\world^{\policy^*}$. If the model can accurately simulate $\policy^*$, we can expect to find it in the planning phase, since it would obtain high rewards. This domain adaptation term is a consequence of the exploration problem, and is unavoidable if we desire globally optimal policies. Indeed, even purely model-free algorithms suffer from an analogous divergence term~\cite{Kakade2002CPI, Munos2008FiniteTimeBF}. However, Theorem~\ref{game_theorem} also applies to locally optimal policies (see appendix~\ref{appendix:theory}) for which we may expect better model transfer.
    \item The domain adaptation term can be minimized by considering a wide initial state distribution~\cite{Kakade2002CPI, Rajeswaran17nips}. This ensures the learned model is more broadly accurate. However, in some applications, the initial state distribution may not be under our control. In such a case, we may draw upon advances in domain adaptation~\cite{BenDavid2006AnalysisOR, Sun2015ReturnOF} to learn state-action representations better suited for transfer across different policies.
\end{enumerate}

\section{Algorithms}
\label{sec:algorithms}
So far, we have established how MBRL can be viewed as a game that couples policy and model learning. We now turn to developing algorithms for solving the game. Unlike common deep learning settings (e.g. supervised learning), there are no standard workhorses for continuous games. Direct extensions of optimization workhorses (e.g. SGD) are unstable for games due to non-stationarity~\cite{Wang2019FollowRidge,  Fiez2019ConvergenceOL}. We first review some of these extensions before presenting our final algorithms.

\subsection{Independent simultaneous learners}
\label{sec:gda_br}
We first consider a class of algorithms where each player individually optimize their own objectives using gradient descent. Thus, each player treats the setting as stochastic optimization unaware of potential drifts in their objectives due to the two-player nature. These algorithms are sometimes called independent learners, simultaneous learners, or naive learners~\cite{Wang2019FollowRidge, LOLA}.

\textbf{Gradient Descent Ascent (GDA)}
In GDA, each player performs an improvement step holding the parameters of the other player fixed. The resulting updates are given below.
\begin{align}
    \label{eq:gda_updates}
    \postupdate{\policy_{k+1}} & = \policy_k + \alpha_k \nabla_\policy \perf(\policy_k, \model_k) \\
    \postupdate{\model_{k+1}} & = \model_k - \beta_k \nabla_\model \loss(\model_k, \mu_\world^{\policy_k})
\end{align}
Both the players update their parameters simultaneously from iteration $k$ to $k+1$. For simplicity, we consider standard gradient descent, which can be equivalently replaced with momentum, Adam, natural gradient etc. Variants of GDA have been used to solve min-max games arising in deep learning such as GANs. However, for certain problems, it can exhibit poor convergence and require very small learning rates~\cite{Schfer2019CompetitiveGD} or domain-specific heuristics. Furthermore, it makes sub-optimal use of data, since it is desirable to take multiple policy improvement steps to fully reap the benefits of model learning. 

\textbf{Best Response (BR)}
The BR algorithm aims to mititage the above drawback, where each player computes the best response while fixing the parameters of other players. The best response can be approximated in practice using a large number of gradient steps.
\begin{align}
    \policy_{k+1} & = \arg \max_\policy \ \perf(\policy, \model_k) \\
    \model_{k+1} & = \arg \min_\model \ \loss(\model, \mu_\world^{\policy_k})
\end{align}
Again, both players simultaneously update their parameters. It is known from a large body of work in online learning that aggressive changes can destabilize learning in non-stationary settings~\cite{CesaBianchi2006PLA}. Large changes to the policy can dramatically alter the sampling distribution, which renders the model incompetent. Similarly, large changes in the model can bias policy learning. In Section~\ref{sec:experiments} we experimentally study the performance of GDA and BR on a suite of control tasks and verify that they inefficient (slow) or unstable.

\subsection{Stackelberg formulation and algorithms}
\label{sec:stackelberg}
To achieve stable and sample efficient learning, we require algorithms that take the game structure into account. While good workhorses are lacking for general games, Stackelberg games~\cite{Stackelberg2010MarketSA} are an exception. They are asymmetric games where we impose a specific playing order and are a generalization of min-max games. We cast the MBRL game in the Stackelberg form, and derive gradient based algorithms to solve the resulting game.

First, we briefly review continuous Stackelberg games. Consider a two player game with players $A$ and $B$. Let $\x_A, \x_B$ be their parameters, and $\f_A(\x_A, \x_B)$, $\f_B(\x_A, \x_B)$ be their losses. Each player would like their losses minimized. With player $A$ as the leader, the Stackelberg game corresponds to the following nested optimization:
\begin{equation}
\label{eq:stackelberg}
\begin{split}
\underset{\x_A}{\mathrm{min}} & \  \f_A \big( \x_A, \x_B^*(\x_A) \big) \\
\text{subject to } & \ \x_B^*(\x_A) = \arg \min_{\tilde{\x}} \ \f_B(\x_A, \tilde{\x})
\end{split}
\end{equation}
Since the follower chooses the best response, the follower's parameters are implicitly a function of the leader's parameters. The leader is aware of this, and can utilize this information when updating its parameters. The Stackelberg formulation has a number of appealing properties.
\vspace*{-10pt}
\begin{itemize}[leftmargin=*]
    \itemsep0em
    \item {\bf Algorithm design based on optimization: } From the leader's viewpoint, the Stackelberg formulation transforms a game with complex interactions into a more familiar albeit complex bi-level optimization, for which we have gradient based workhorses~\cite{Colson2007AnOO}.
    \item {\bf Notion of stability and progress: } In general games, there exists no single function that can be used to check if an iterative algorithm makes progress towards the equilibrium. This makes algorithm design and diagnosis difficult. By reducing the game to an optimization, the leader's loss $\f_A(\x_A, \x_B)$ can be used to track progress.
\end{itemize}
\vspace*{-10pt}

For simplicity of exposition, we assume that the best-response is unique for the follower. We later remark on the possibility of multiple minimizers.  To solve the nested optimization, it suffices to focus on $\x_A$ since the follower parameters $\x_B^*(\x_A)$ are implicitly a function of $\x_A$. We can iteratively optimize $\x_A$ as: $\x_A \leftarrow \x_A - \alpha_A \left( \deriv \f_A(\x_A, \x_B^*(\x_A)) / \deriv \x_A \right)$, where the gradient is described in Eq.~\ref{eq:leader_gradient}.  The key to solving a Stackelberg game is to make the follower learn very quickly to approximate the best response, while the leader learns slowly.
\begin{equation}
    \label{eq:leader_gradient}
    \begin{split}
    \frac{\deriv \f_A \left( \x_A, \x_B^*(\x_A) \right)}{\deriv \x_A} & = \frac{\deriv \x_B^*}{\deriv \x_A} \left.\frac{\partial \f_A(\x_A, \x_B)}{\partial \x_B}\right\vert_{\x_B = \x_B^*} \\
    & + \left.\frac{\partial \f_A (\x_A, \x_B)}{\partial \x_A}\right\vert_{\x_B = \x_B^*} \\ 
    \end{split}
\end{equation}
The implicit Jacobian term $(\deriv \x_B^* / \deriv \x_A)$ can be obtained using the implicit function theorem~\cite{Krantz2002TheIF, Rajeswaran2019ImplicitMAML}.
Thus, in principle, we can compute the gradient with respect to the leader parameters and solve the nested optimization (to at least a local minimizer). To develop a practical algorithm based on these ideas, we use a few relaxations and approximations. First, we approximate the best response with multiple steps of an iterative optimization algorithm. Secondly, we drop the implicit Jacobian term and use a ``first-order'' approximation of the gradient. Such an approximation has proven effective in applications like meta-learning~\cite{nichol2018first}, GANs~\cite{Heusel2017GANsTB, Metz2017UnrolledGA}, and multiple timescale actor-critic methods~\cite{Konda1999ActorCriticT}. Finally, since the Stackelberg game is asymmetric, we can cast the MBRL game in two forms based on which player we choose as the leader.

\textbf{Policy As Leader (PAL):} Choosing the policy player as leader results in the following optimization:
\begin{align*}
\label{eq:pal_objective}
\max_{\policy} \ \left\{ \perf(\policy, \model^\policy) \ \ s.t. \ \ \model^\policy \in \arg \min_\model \ \ell(\model, \mu_\world^\policy) \right\}.
\end{align*}
We solve this nested optimization using the first order gradient approximation, resulting in updates:
\begin{align}
    \postupdate{\model_{k+1}} & \approx \arg \min_\model \loss(\model, \mu_\world^{\policy_k}) \\
    \postupdate{\policy_{k+1}} & = \policy_k + \alpha_k \nabla_\policy \perf(\policy, \model_{k+1}) 
\end{align}
We first aggressively improve the model to minimize the loss under current visitation distribution. Subsequently we take a conservative policy. The algorithmic template is described further in Algorithm~\ref{alg:pal_algo}. Note that the PAL updates are different from GDA even if a single gradient step is used to approximate the $\arg \min$. In PAL, the model is first updated using the current visitation distribution from $\model_k$ to $\model_{k+1}$. The policy subsequently uses $\model_{k+1}$ for improvement. In contrast, GDA uses $\model_k$ for improving the policy. Finally, suppose we find an approximate solution to the PAL optimization such that $\perf(\policy, \model^\policy) \geq \sup_{\tilde{\policy}} \perf(\tilde{\policy}, \model^{\tilde{\policy}}) - \epsilon_\policy$. Since the model is optimal for the policy by constriction, we inherit the guarantees of Theorem~\ref{game_theorem}.

\begin{algorithm}[h!]
  \caption{Policy as Leader (PAL) meta-algorithm}
  \label{alg:pal_algo}
\begin{algorithmic}[1]
\STATE {\bf Initialize:}  policy $\policy_0$, model $\model_0$, data buffer $\cD = \{ \null \}$
\FOR{$k=0, 1, 2, \ldots$ forever}
    \STATE Collect data $\cD_k$ by executing $\policy_k$ in the environment
    \STATE Build local (policy-specific) dynamics model: $\model_{k+1} = \arg \min \ \loss(\model, \cD_k)$
    \STATE Improve policy: $\policy_{k+1} = \policy_k + \alpha \nabla_\policy \perf(\policy_k, \model_{k+1})$ \ with a conservative algorithm like NPG or TRPO.
\ENDFOR
\end{algorithmic}
\end{algorithm}

\textbf{Model as Leader (MAL):} Conversely, choosing model as the leader results in the optimization
\begin{align}
\label{eq:mal_objective}
\min_\model \ \left\{ \loss(\model, \mu_\world^{\policy_\model}) \ \ s.t. \ \ \policy_\model \in \arg \max_\policy \perf(\policy, \model) \right\}.
\end{align}
Similar to the PAL formulation, using first order approximation to the bi-level gradient results in:
\begin{align}
    \postupdate{\policy_{k+1}} & \approx \arg \max_\policy \perf(\policy, \model_k) \\
    \postupdate{\model_{k+1}} & = \model_k - \beta_k \nabla_\model \loss(\model, \mu_\world^{\policy_{k+1}})
\end{align}
We first optimize a policy for the current model. Subsequently, we conservatively improve the model using the data collected with the optimized policy. In practice, instead of a single conservative model improvement step, we aggregate all the historical data and perform a few epochs of training. This has an effect similar to conservative model improvement in a follow the regularized leader interpretation~\cite{ShaiBook, Ross2012AgnosticSI, FTRL_MD_equivalence}. The algorithmic template is described in Algorithm~\ref{alg:mal_algo}. Similar to the PAL case, we again inherit the guarantees from Theorem~\ref{game_theorem}.

\begin{algorithm}[h!]
  \caption{Model as Leader (MAL) meta-algorithm}
  \label{alg:mal_algo}
\begin{algorithmic}[1]
\STATE {\bf Initialize:}  policy $\policy_0$, model $\model_0$, data buffer $\cD = \{ \null \}$
\FOR{$k=0, 1, 2, \ldots$ forever}
    \STATE Optimize $\policy_{k+1} = \arg \max_\policy \perf(\policy, \model_k)$ using any algorithm (RL, MPC, planning etc.)
    \STATE Collect environment data $\cD_{k+1}$ using $\policy_{k+1}$
    \STATE Improve model $\model_{k+1} = \model_k - \beta \nabla_\model \loss(\model, \cD_{k+1})$ using any conservative algorithm like mirror descent, data aggregation etc.
\ENDFOR
\end{algorithmic}
\end{algorithm}

{\bf On distributionally robust models and policies}
Finally, we illustrate how the Stackelberg framework is consistent with commonly used robustification heuristics. We now consider the case where there could be multiple best responses to the leader eq.~\ref{eq:stackelberg}. For instance, in PAL, there could be multiple models that achieve low error for the policy. Similarly, in MAL, there could be multiple policies that achieve high rewards for the specified model. In such cases, the standard notion of Stackelberg equilibrium is to optimize under the worst case realization~\cite{Fiez2019ConvergenceOL}, which results in:
\begin{equation}
    \begin{split}
    & \min_{\x_A} \max_{\x_B \in R(\x_A)} \ \  \f_A(\x_A, \x_B), \ \ \text{where} \\
    & R(\x_A) \overset{\text{def}}{=} \left\{ \tilde{\x} \ \vert \ \f_B(\x_A, \tilde{\x}) \leq \f_B(\x_A, \x_B) \ \forall \x_B \right\}.
    \end{split}
\end{equation}
In PAL, model ensemble approaches correspond to approximating the best response set with a finite collection (ensemble) of models. Algorithms inspired by robust or risk-averse control~\cite{RobustControl-book, Garcia2015ACS, Rajeswaran2016EPOpt} explicitly improve against the adversarial choice in the ensemble, consistent with the Stackelberg setting. Similarly, in the MAL formulation, entropy regularization~\cite{Haarnoja2018SoftAA, Hazan2018ProvablyEM} and disagreement based reward bonuses~\cite{pathakICMl17curiosity, Pathak2019SelfSupervisedEV} lead to adversarial best response by encouraging the policy to visit parts of the state space where the model is likely to be inaccurate. Our Stackelberg formulation provides a principled foundation for these important components, which have thus far been viewed as heuristics.

\section{Experiments}
\label{sec:experiments}

In our experiemental evaluation, we aim to primarily answer the following questions:
\vspace*{-10pt}
\begin{enumerate}[leftmargin=*]
    \itemsep0em
    \item Do independent learning algorithms (GDA and BR) learn slowly or suffer from instabilities?
    \item Do the Stackelberg-style algorithms (PAL and MAL) enable stable and sample efficient learning?
    \item Do MAL and PAL exhibit different learning characteristics and strengths? Can we characterize the situations where one is more preferable than the other?
\end{enumerate}
\vspace*{-10pt}

\begin{figure*}[t]
    \centering
    \includegraphics[width=0.8\textwidth]{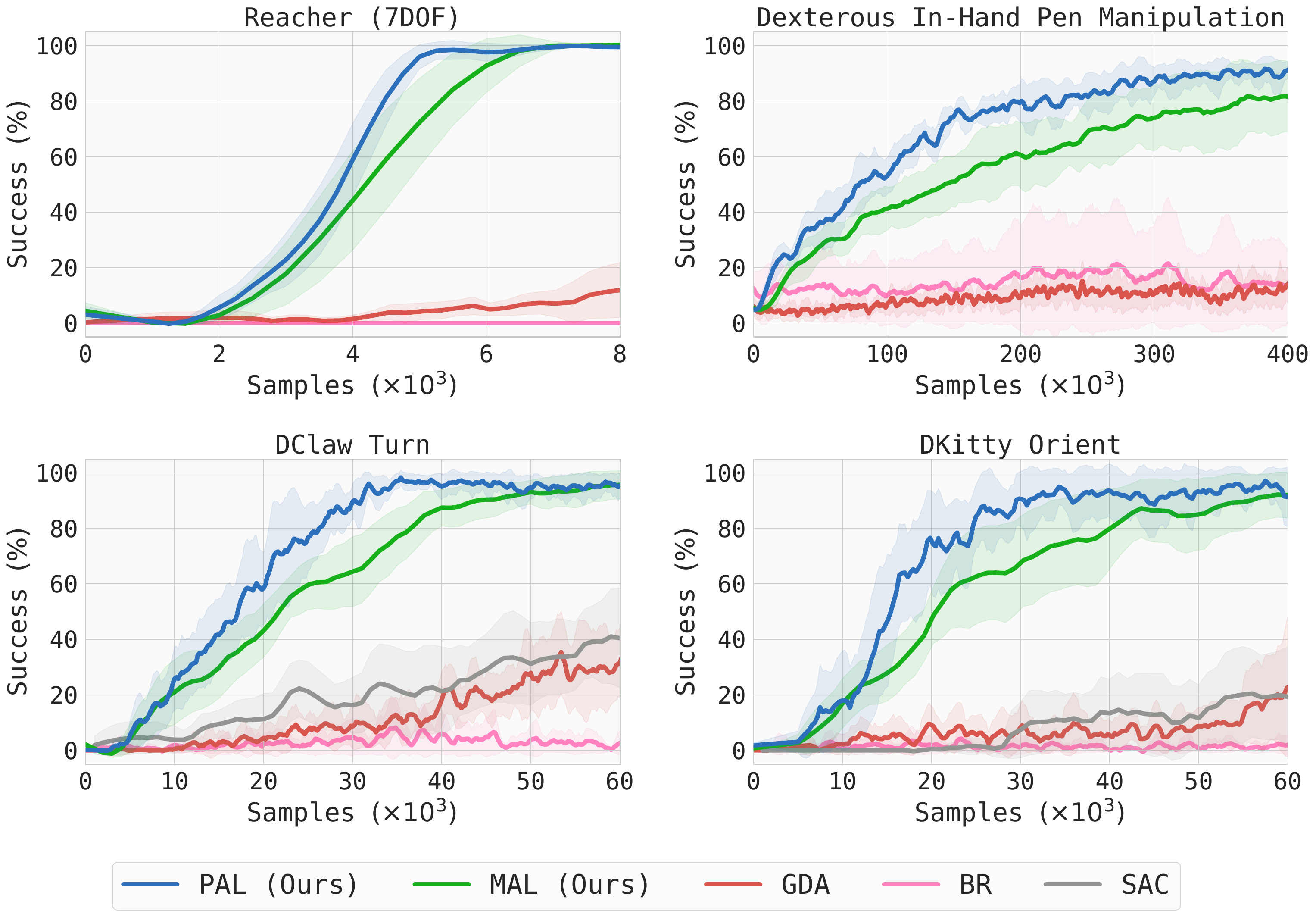}
    \caption{Comparison of the learning algorithms. We report results based on 5 random seeds, with solid lines representing the average performance, and shaded regions indicate standard deviation across seeds. PAL and MAL exhibit stable and sample efficient learning. GDA learns very slowly due to sub-optimal use of data. BR does not lead to stable learning due to aggressive changes to both policy and model. For the ROBEL tasks, as a point of comparison, we also include results of SAC a state of the art model-free algorithm.}
    \vspace*{-15pt}
    \label{fig:base_learning_curves}
\end{figure*}

\textbf{Task Suite}
We study the behavior of algorithms on a suite of  continuous control tasks consisting of: \texttt{DClaw-Turn}, \texttt{DKitty-Orient}, \texttt{7DOF-Reacher}, and \texttt{InHand-Pen}. The tasks are illustrated in Figure~\ref{fig:task_illustrations} and further details are provided in Appendix~\ref{appendix:tasks}. The DClaw and DKitty tasks use physically accurate models of robots~\cite{Zhu2018DexterousMW, Kumar_ROBEL}. The Reacher task is a representative whole arm manipulation task, while the in-hand dexterous manipulation task~\cite{Rajeswaran-RSS-18} serves as a representative high-dimensional control task. In addition, we also present results with our algorithms in the OpenAI gym tasks in Appendix~\ref{appendix:gym_results}.

\begin{figure}[b!]
    \centering
    \includegraphics[width=0.45\textwidth]{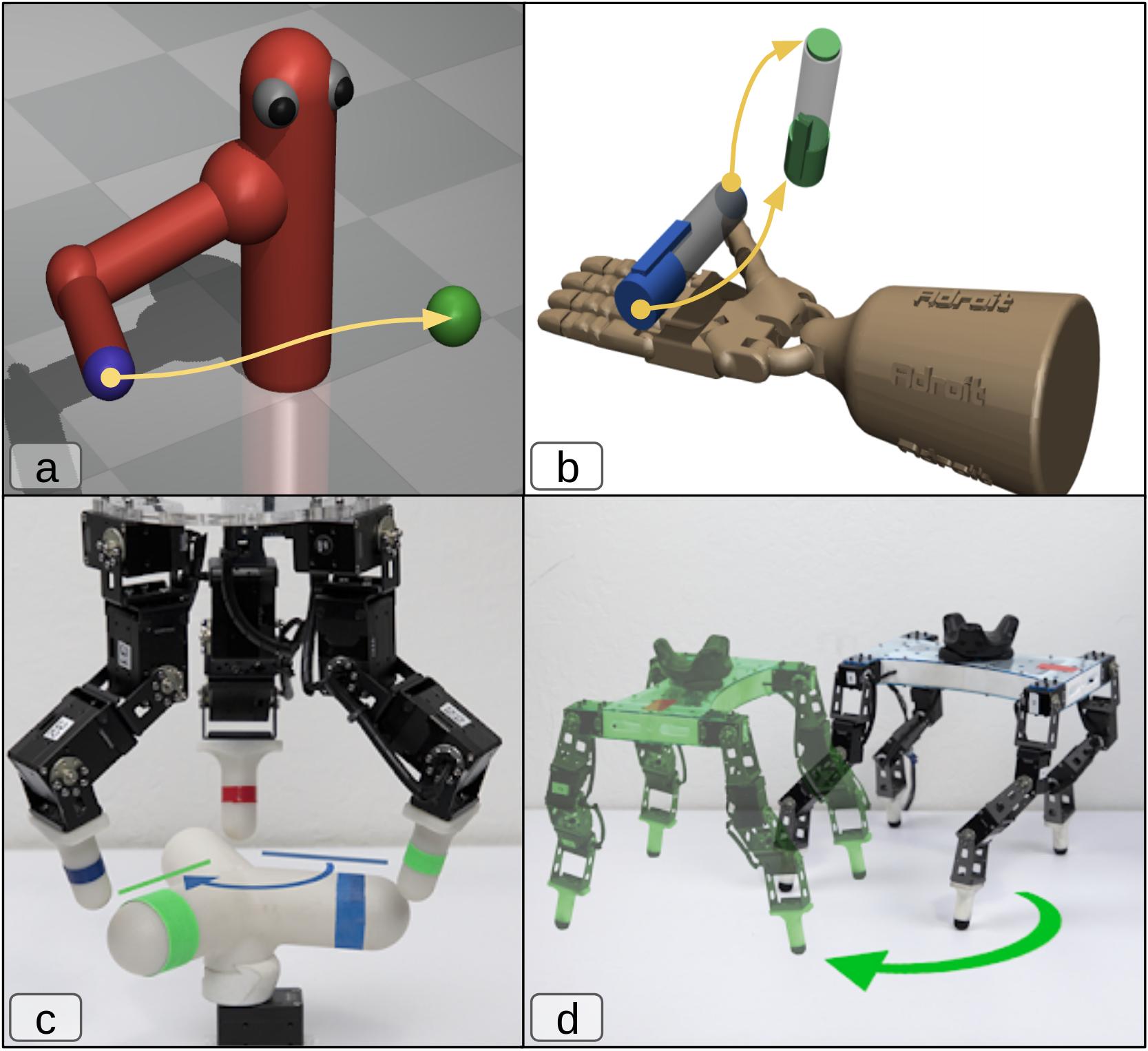}
    \caption{(a) Reacher task with a 7DOF arm. 
    (b) In-hand manipulation task with a 24DOF dexterous hand. 
    (c) DClaw-Turn task with a 3 fingered ``claw''. 
    (d) DKitty-Orient task with a quadrupedal robot. 
    In all the tasks, the desired goal configurations are randomized every episode, which forces the RL agent to learn generalizable policies. We measure and use success rate for our experimental evaluations.
    }
    \label{fig:task_illustrations}
\end{figure}

\textbf{Algorithm Details} For all the algorithms of interest (GDA, BR, PAL, MAL), we represent the policy as well as the dynamics model with fully connected neural networks. We instantiate all of these algorithm families with model-based natural policy gradient. Details about the implementation are provided in Appendix~\ref{appendix:algorithm}. We use ensembles of dynamics models and entropy regularization to encourage robustness.

\begin{figure*}[t!]
    \centering
    \includegraphics[width=\textwidth]{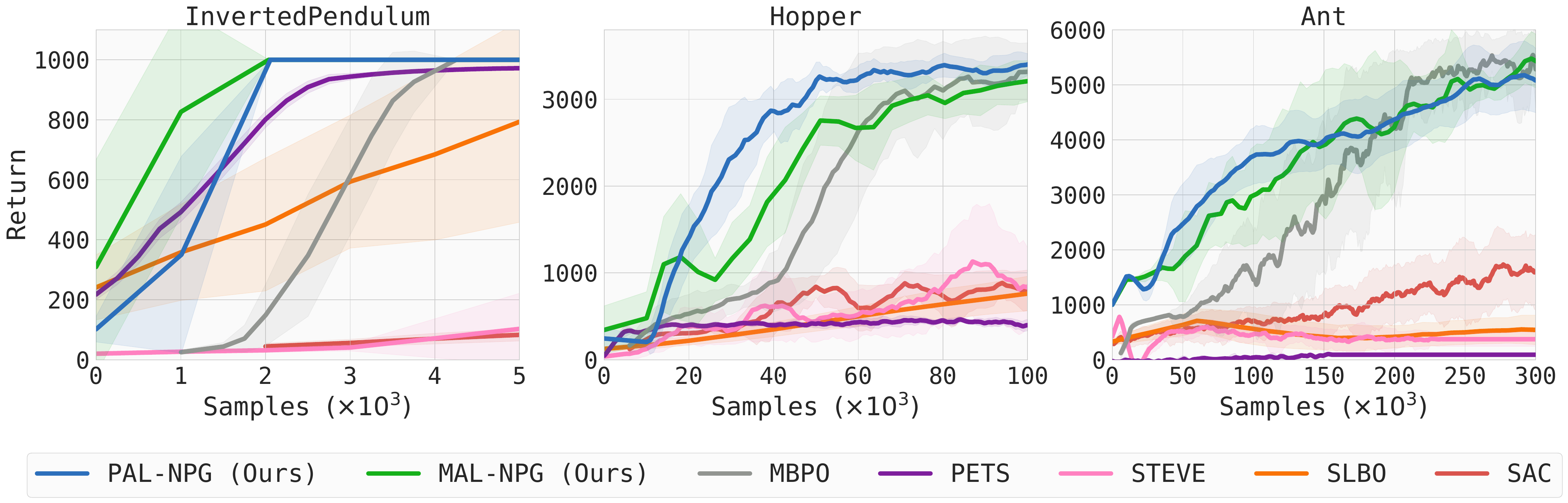}
    \vspace*{-10pt}
    \caption{Comparison of results on the OpenAI gym benchmark tasks. Results for the baselines are reproduced from \citet{MBPO}. Solid lines are the average performance curves over 5 random seeds, while shaded region represents the standard deviation over these 5 runs. We observe that PAL and MAL show near-monotonic improvement, and substantially outperform the baselines.}
    \vspace*{-10pt}
    \label{fig:gym_results}
\end{figure*}

\textbf{Comparison of learning algorithms }
We first study the performance of Stackelberg-style algorithms (PAL, MAL) and compare against the performance of independent algorithms (GDA and BR). Our results, summarized in Figure~\ref{fig:base_learning_curves}, suggest that PAL and MAL can learn all the tasks efficiently. We observe near monotonic improvement, suggesting that the Stackelberg formulation enables stable learning. We also observe that PAL learns faster than MAL for the tasks we study. While GDA eventually achieves near-100\% success rate, it leads to considerably slower learning. As outlined in Section~\ref{sec:algorithms}, this is likely due to conservative nature of updates for both the policy and the model. Furthermore, the performance fluctuates rapidly during course of learning, since it does not correspond to stable optimization of any objective. Finally, we observe that BR is unable to make consistent progress. As suggested earlier in Section~\ref{sec:algorithms}, BR makes rapid changes to both model and policy which exacerbates the challenge of distribution mismatch.

As a point of comparison, we also plot results of SAC~\cite{Haarnoja2018SoftAA}, a leading model-free algorithm for the ROBEL tasks (results taken from \citet{Kumar_ROBEL}). Although SAC is able to solve these tasks, its sample efficiency is comparable to GDA, and substantially slower than PAL and MAL. To compare against other model-based algorithms, we turn to published results from prior work on OpenAI gym tasks. In Figure~\ref{fig:gym_results}, we show that PAL and MAL significantly outperforms prior algorithms. In particular, PAL and MAL are 10 times as efficient as other model-based and model-free methods. PAL is also twice as efficient as MBPO~\cite{MBPO}, a state of the art hybrid model-based and model-free algorithm. Further details about this comparison are provided in Appendix~\ref{appendix:gym_results}.

Overall our results indicate that PAL and MAL: \ (a) are substantially more sample efficient than prior model-based and model-free algorithms; (b) achieve the asymptotic performance of their model-free counterparts; (c) can scale to high-dimensional tasks with complex dynamics like dexterous manipulation; (d) can scale to tasks requiring extended rollout horizons (e.g. the OpenAI gym tasks).

\begin{figure}[b!]
    \centering
    \vspace*{-10pt}
    \includegraphics[height=3.8cm]{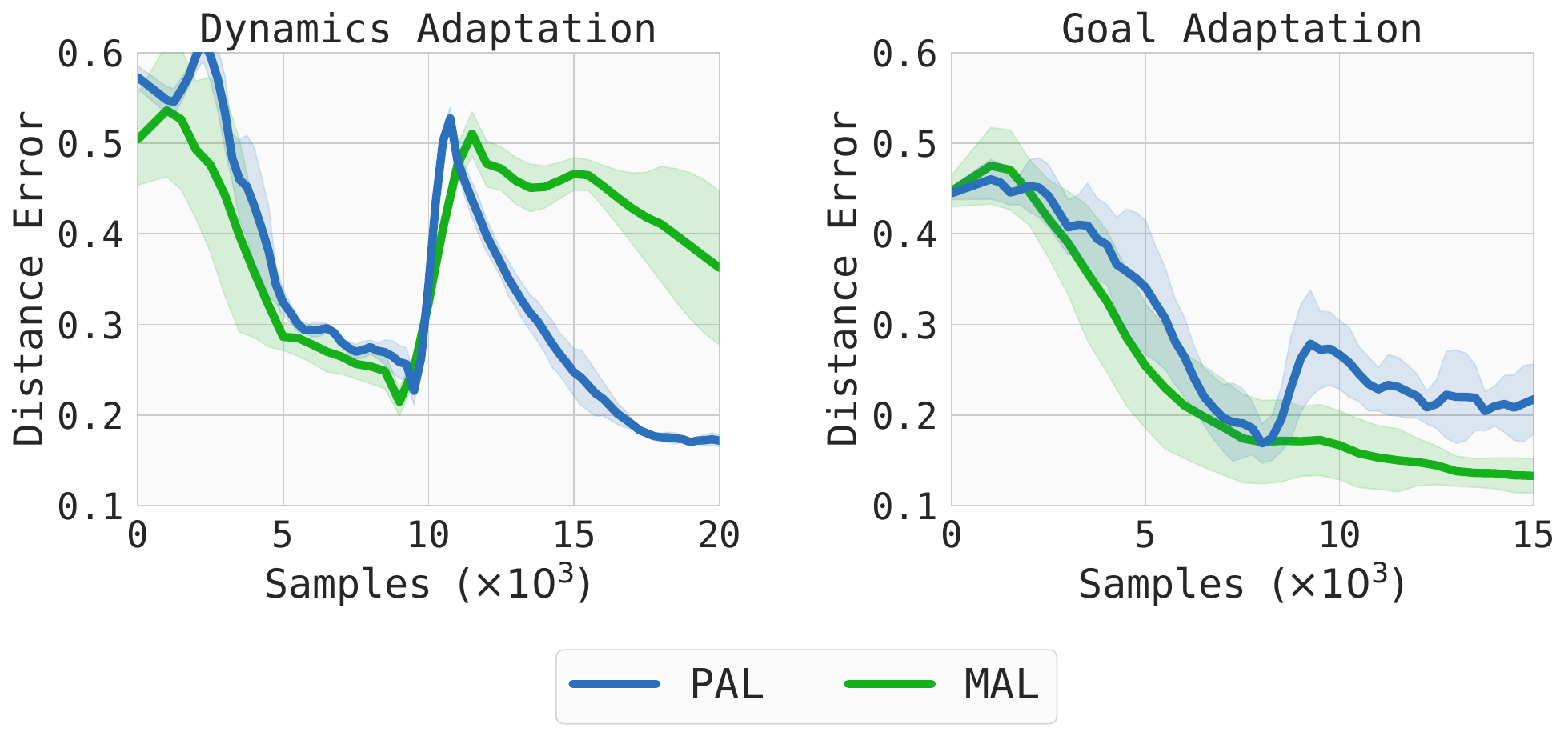}
    \vspace*{-10pt}
    \caption{PAL vs MAL in non-stationary learning environments. Y axis is the distance between end effector and goal, averaged over the trajectory (lower is better). The left plot corresponds to the case where the dynamics of $\world$ is changed after $10^4$ samples, while the right plot corresponds to the case where we change the goal distribution after $8 \times 10^3$ samples. We observe that PAL recovers quickly from dynamics perturbations, while MAL recovers quickly from goal perturbations.}
    \label{fig:nonstationary}
\end{figure}

\textbf{Choosing between PAL and MAL}
Finally, we turn to studying relative strengths of PAL and MAL. For this, we consider two variations of the 7DOF reacher task (from Figure~\ref{fig:task_illustrations}) corresponding to environment perturbations at an intermediate point of training. In the first case, we perturb the dynamics by changing the length of the forearm. In the second case, halfway through the training, we change the goal distribution to a different region of 3D space. Training curves are presented in Figure~\ref{fig:nonstationary}. Note that there is a performance drop at the time of introducing the perturbation.

For the first case of dynamics perturbation, we observe that PAL recovers faster. Since PAL learns the model aggressively using recent data, it can forget old inconsistent data and improve the policy using an accurate model. In contrast, MAL adapts the model conservatively, taking longer to forget old inconsistent data, ultimately biasing and slowing the policy learning.
In the second experiment, the dynamics is stationary but the goal distribution changes midway. Note that the policy does not generalize zero-shot to the new goal distribution, and requires additional learning or fine-tuning. Since MAL learns a more broadly accurate model, it quickly adapts to the new goal distribution. In contrast, PAL conservatively changes the policy and takes longer to adapt to the new goal distribution.

Thus, in summary, we find that PAL is better suited for situations where the dynamics of the world can drift over time. In contrast, MAL is better suited for situations where the task or goal distribution can change over time, and related settings like multi-task learning.

\section{Related Work}
\label{sec:related_work}

MBRL and the closely related fields of adaptive control and system identification have a long and rich history (see~\citet{AstromBook, Ljung1987SystemIT} for overview). Early works in MBRL primarily focused on tabular reinforcement learning in a known {\em generative model} setting~\cite{Kearns1998FiniteSampleCR, Agarwal2019OptimalityAA}. However, this setting assumes access to a highly exploratory policy to collect data, which is often not available in practice. Subsequent works like E3~\cite{Kearns1998NearOptimalRL} and R-MAX~\cite{Brafman2001RMAXA} attempt to lift this limitation, but rely heavily on tabular representations which are inadequate for modern applications like robotics. Coupled with advances in deep learning, there has been a surge of interest in incremental MBRL algorithms with rich function approximation. They generally fall into two sets of approaches, as we outline below.

The first set of approaches are largely inspired by trust region methods, and are similar to the PAL family from our work. A highly accurate ``local'' model is constructed around the visitation distribution of the current policy, which is subsequently used to conservatively improve the policy. The trust region is intended to ensure that the model is accurate for all policies within it, thereby enabling monotonic performance improvement. GPS~\cite{GPS_unknown_model, Mordatch14rss}, DPI~\cite{Sun2018DualPI}, and related approaches~\cite{fitted_LQR} learn a time varying linear model and perform a KL-constrained policy improvement step. Such a model representation is convenient for an iLQG based policy update~\cite{Todorov2005}, but might be restrictive for complex dynamics beyond trajectory-centric RL. To remove these limitations, recent works have started to consider neural networks to represent both policy and the dynamics model. However, somewhat surprisingly, a clean version from the PAL family has not been studied with neural network models. The motivations presented by \citet{Xu2018AlgorithmicFF} and \citet{Kurutach2018ModelEnsembleTP} resemble PAL, however their practical implementations do not strongly enforce the conservative nature of the policy update.%

An alternate set of MBRL approaches take a view similar to MAL. Models are updated conservatively through data aggregation, while policies are aggressively optimized. \citet{Ross2012AgnosticSI} explicitly studied the role of data aggregation in MBRL. They presented an agnostic online learning view of MBRL and showed that data aggregation can lead to a no-regret algorithm for learning the model, even with aggressive policy optimization. Subsequent works have used data augmentation and proposed additional components to enhance efficiency and stability, such as the use of model predictive control~\cite{MPPI, POLO, PDDM}, uncertainty quantification through Bayesian models~\cite{PILCO}, and ensembles of dynamics models~\cite{Rajeswaran2016EPOpt, PETS, PDDM}.
We refer readers to \citet{Wang2019BenchmarkingMR} for overview of recent MBRL advances. 

While specific instances of PAL and MAL have been studied in the past, an overarching framework around them has been lacking. Our descriptions of the PAL and MAL families generalize and unify core insights from prior work and simplify them from the lens of abstraction. Furthermore, the game theoretic formulation enables us to form a connection between the PAL and MAL frameworks. 
We also note that the PAL and MAL families have similarities to multiple timescale algorithms~\cite{Konda1999ActorCriticT, Konda2004ConvergenceRO, Karmakar2015TwoTS} studied for actor-critic temporal difference learning. These ideas have also been extended to study min-max games like GANs~\cite{Heusel2017GANsTB}. However, they have not been extended to study model-based RL.

We presented a model-based setting where the model is used to directly improve the policy through rollout based optimization. However, models can be utilized in other ways too. Dyna~\cite{Dyna} and MBPO~\cite{MBPO} use a learned model to provide additional learning targets for an actor-critic algorithm through short-horizon synthetic trajectories. MBVE~\cite{MBVE}, STEVE~\cite{STEVE}, and doubly-robust methods~\cite{Jiang2015DoublyRO, Thomas2016DataEfficientOP, Farajtabar2018MoreRD} use model-based rollouts to obtain more favorable bias-variance trade-offs for off-policy evaluation. Some of these works have noted that long horizon rollouts can exacerbate model bias. However, in our experiments, we were able to successfully perform rollouts of hundreds of steps. This is likely due to our practical implementation closely following the game theoretic algorithms designed explicitly to mitigate distribution shift and enable effective simulation. It is straightforward to extend PAL and MAL to a hybrid model-based and model-free algorithm, which is likely to provide further performance gains. Similarly, approaches that bootstrap from the model's predictions can improve multi-step simulation~\cite{DaD, ScheduledSampling}. We leave exploration of these directions for future work.

\section{Summary and Conclusion}
\label{sec:conclusion}
In this work, we developed a new framework for MBRL that casts it as a game between a policy player and a model player. We established that at equilibrium: (1) the model accurately simulates the policy and predicts its performance; (2) the policy is near-optimal. We derived sub-optimality bounds and made a connection to domain adaptation to characterize the equilibrium quality. 

In order to solve the MBRL game, we constructed the Stackelberg version of the game. This has the advantage of: (1) effective gradient based workhorses to solve the Stackelberg optimization problem; (2) an effective objective function to track learning progress towards equilibrium. General continuous games possess neither of these characteristics.
The Stackelberg game can take two forms based on which player we choose as the leader, resulting in two natural algorithm families, which we named PAL and MAL. Together they encompass, generalize, and unify a large collection of prior MBRL works. This greatly simplifies MBRL and particularly algorithm design from the lens of abstraction.

We developed practical versions of PAL and MAL using model-based natural policy gradient. We demonstrated stable and sample efficient learning on a suite of control tasks, including state of the art results on OpenAI gym benchmarks. These results suggest that our practical variants of PAL and MAL: 
\vspace*{-10pt}
\begin{itemize}[leftmargin=*]
    \itemsep0em
    \item are substantially more sample efficient compared to prior model-based and model-free algorithms,
    \item can achieve the same asymptotic performance as model-free counterparts,
    \item can scale to high-dimensional tasks with complex dynamics like dexterous manipulation,
    \item can scale to tasks requiring long horizon rollouts (e.g. OpenAI gym tasks which have a $1000$ timestep horizon).
\end{itemize}

More broadly, our work adds to a growing body of recent work which suggests that MBRL can be stable, sample efficient, and more generalizable or adaptable to new tasks and non-stationary settings. For future work, we hope to study alternate ways to solve the Stackelberg optimization; such as using the full implicit gradient term and unrolled optimization. Finally, although we presented our game theoretic framework in the context of MBRL, it is more broadly applicable for any surrogate based optimization including actor-critic methods. It would make for interesting future work to study broader extensions and implications.

\section*{Acknowledgements}
We thank Profs. Emo Todorov, Sham Kakade, Drew Bagnell, and Sergey Levine for valuable feedback and discussions. We thank Michael Ahn and Michael Janner for sharing the baseline learning curves; and Ben Eysenbach and Anirudh Vemula for feedback on the paper draft. The work was done by Aravind Rajeswaran during internships at Google Brain, MTV. Aravind thanks Google for providing a highly encouraging research environment.

\bibliography{references}
\bibliographystyle{styles/icml2020}

\newpage
\appendix
\onecolumn
\clearpage
\newpage
\section{Theory}
\label{appendix:theory}

We provide the formal statements and proofs for theoretical results in the paper. 

\subsection{Performance with Global Models}

{\bf Lemma 1 restated.}
{\em 
(Simulation lemma) Suppose we have a model $\model$ such that 
\[
D_{TV}(P_\world(\cdot|s,a), P_\model(\cdot|s,a)) \leq \epsilon_\world \ \ \forall (s,a),
\]
and the reward function is such that $\abs{\cR(s)} \leq R_{\max} \ \forall s \in \cS$. Then, we have
\[
\abs{\perf(\policy, \world) - \perf(\policy, \model)} \leq \frac{2 \gamma \epsilon_\world R_{\max}}{(1-\gamma)^2} \ \ \ \forall \policy
\]
}

\begin{proof}

Let $\Vpi(s, \world)$ and $\Vpi(s, \model)$ denote the value of policy $\policy$ starting from an arbitrary state $s \in \cS$ in $\world$ and $\model$ respectively. For simplicity of notation, we also define 
\[
\Ppi_\world(s'|s) := \bE_{a \sim \policy(\cdot|s)} \left[ P_\world(s'|s,a) \right] \hspace*{10pt} \text{ and } \hspace*{10pt} \Ppi_\model(s'|s) := \bE_{a \sim \policy(\cdot|s)} \left[ P_\model(s'|s,a) \right].
\]
Before the proof, we note the following useful observations.
\begin{enumerate}
    \item Since $D_{TV}(P_\world(\cdot|s,a), P_\model(\cdot|s,a)) \leq \epsilon_\world \ \forall (s,a)$, the inequality also holds for an average over actions, i.e. $D_{TV}(\Ppi_\world(\cdot|s), \Ppi_\model(\cdot|s)) \leq \epsilon_\world \ \forall s$.
    \item Since the rewards are bounded, we can achieve a maximum reward of $R_{\max}$ in each time step. Using a geometric summation with discounting $\gamma$, we have
    \[
    \max_{s \in \cS} \ \Vpi(s, \world) \leq \frac{R_{\max}}{1-\gamma} \ \ \forall \policy, s
    \]
    
    \item Let $f(x): x \in \cX \rightarrow [-f_{\max}, f_{\max}]$ be a real-valued function with bounded range, i.e. $0 \leq f_{\max} < \infty$. Let $P_1(x)$ and $P_2(x)$ be two probability distribution (density) over the space $\cX$. Then, we have
    \[
    \abs{\bE_{x \sim P_1(\cdot)} [f(x)] - \bE_{x \sim P_2(\cdot)} [f(x)]} \leq 2 f_{\max} \ D_{TV}(P_1, P_2)
    \]
\end{enumerate}
Using the above observations, we have the following inequalities:
\[
\abs{\Vpi(s, \world) - \Vpi(s, \model)}
\]
\vspace*{-25pt}
\begin{align*}
= \ & \abs{\cR(s) + \gamma \bE_{s'\sim \Ppi_\world(\cdot|s)} \big[ \Vpi(s', \world) \big] - \cR(s) - \gamma \bE_{s'\sim \Ppi_\model(\cdot|s)} \big[ \Vpi(s', \model) \big]} \\
\leq \ & \gamma \abs{ \bE_{s'\sim \Ppi_\world(\cdot|s)} \big[ \Vpi(s', \world) \big] - \bE_{s'\sim \Ppi_\model(\cdot|s)} \big[ \Vpi(s', \world) \big]} + \\
& \gamma \abs{\bE_{s'\sim \Ppi_\model(\cdot|s)} \big[ \Vpi(s', \world) - \Vpi(s', \model) \big]} \\
\leq \ & 2 \gamma \left( \max_{s' \in \cS} \ \Vpi(s', \world) \right) D_{TV}(\Ppi_\world(\cdot|s), \Ppi_\model(\cdot|s)) + \gamma \max_{s' \in \cS} \ \abs{\Vpi(s', \world) - \Vpi(s', \model)}
\end{align*}

Since the above bound holds for all states, we have that $\forall \policy$
\begin{align*}
    (1-\gamma) \max_{s' \in \cS} \ \abs{\Vpi(s', \world) - \Vpi(s', \model)} & \leq 2 \gamma \left( \max_{s' in \cS} \ \Vpi(s', \world) \right) D_{TV}(\Ppi_\world(\cdot|s), \Ppi_\model(\cdot|s)) \\
    & \leq \frac{2 \gamma R_{\max}}{1-\gamma} D_{TV}(\Ppi_\world(\cdot|s), \Ppi_\model(\cdot|s)) \\
    & \leq \frac{2 \gamma \epsilon_\world R_{\max}}{1-\gamma}
\end{align*}
Stated alternatively, the above inequality implies
\[
\abs{\Vpi(s, \world) - \Vpi(s, \model)} \leq \frac{2 \gamma \epsilon_\world R_{\max}}{(1-\gamma)^2} \ \ \forall s, \policy
\]
Finally, note that the performance criteria $\perf(\policy, \model)$ and $\perf(\policy, \world)$ are simply the average of the value function over the initial state distribution. Since the above inequality holds for all states, it also holds for the average over initial state distribution.
\end{proof}

We note that the above simulation lemma (or closely related forms) have been proposed and proved several times in prior literature (e.g. see \citet{Kearns1998NearOptimalRL, Kakade2003ExplorationIM, Abbeel2005ExplorationAA}). We present the proof largely for completeness and also to motivate the proof techniques we will use for our main theoretical result (Theorem~\ref{game_theorem}).

\subsection{Performance with Task-Driven Local Models}

In this section, we relax the global model requirement and consider the case where we have more local models, as well as the case of a policy-model equilibrium pair. We first provide a lemma that characterizes error amplification in local simulation. 

\begin{lemma}
\label{lemma:error_amplification}
(Error amplification in local simulation) Let $P_1(\cdot|s)$ and $P_2(\cdot|s)$ be two Markov chains with the same initial state distribution. Let $P_1^t(s)$ and $P_2^t(s)$ be the marginal distributions over states at time $t$ when following $P_1$ and $P_2$ respectively. Suppose 
\[
\bE_{s \sim P_1^t} \left[ D_{TV}(P_1(\cdot|s), P_2(\cdot|s)) \right] \leq \epsilon \ \ \forall \ t
\]
then, the marginal distributions are bounded as:
\[
D_{TV}(P_1^t, P_2^t) \leq \epsilon t \ \  \forall \ t
\]
\end{lemma}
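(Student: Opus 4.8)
The plan is a straightforward induction on $t$, leaning on the fact that applying a Markov transition kernel is non-expansive in total variation. The base case $t=0$ is immediate: the two chains share the same initial state distribution, so $P_1^0 = P_2^0$ and $D_{TV}(P_1^0, P_2^0) = 0 = \epsilon \cdot 0$.

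For the inductive step, suppose $D_{TV}(P_1^t, P_2^t) \le \epsilon t$. Using $P_i^{t+1}(s') = \int P_i^t(s)\, P_i(s'|s)\, ds$, I would add and subtract $\int P_1^t(s)\, P_2(s'|s)\, ds$ to split the difference of marginals as
\[
P_1^{t+1}(s') - P_2^{t+1}(s') \;=\; \int P_1^t(s)\big(P_1(s'|s) - P_2(s'|s)\big)\,ds \;+\; \int \big(P_1^t(s) - P_2^t(s)\big)\, P_2(s'|s)\,ds .
\]
Taking $\tfrac12\!\int |\cdot|\,ds'$ of both sides and applying the triangle inequality produces two terms. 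The first, after moving the absolute value inside and swapping the order of integration (Tonelli), is bounded by $\int P_1^t(s)\, D_{TV}\!\big(P_1(\cdot|s),P_2(\cdot|s)\big)\,ds = \bE_{s\sim P_1^t}\!\big[D_{TV}(P_1(\cdot|s),P_2(\cdot|s))\big] \le \epsilon$, which is exactly the hypothesis of the lemma. For the second term, $\tfrac12\!\int\!\big|\!\int (P_1^t(s)-P_2^t(s))P_2(s'|s)\,ds\big|\,ds' \le \tfrac12\!\int\!|P_1^t(s)-P_2^t(s)|\,ds = D_{TV}(P_1^t, P_2^t) \le \epsilon t$, where the first inequality moves the absolute value inside and uses $\int P_2(s'|s)\,ds' = 1$ — this is the kernel-contraction step. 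Adding the two bounds gives $D_{TV}(P_1^{t+1}, P_2^{t+1}) \le \epsilon + \epsilon t = \epsilon(t+1)$, completing the induction.

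There is no serious obstacle here; the one thing to be careful about is the direction of the add-and-subtract decomposition. It must be arranged so that the term carrying $D_{TV}(P_1(\cdot|s), P_2(\cdot|s))$ is weighted by the marginal $P_1^t$ rather than $P_2^t$, since the lemma's hypothesis only controls the conditional total-variation error in expectation under $P_1^t$. I would also remark that the argument is identical for discrete or continuous state spaces (sums versus integrals), and that it uses only the Markov property, namely that $P_i^{t+1}$ depends on the past solely through $P_i^t$ and the one-step kernel.
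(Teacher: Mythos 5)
Your proof is correct and follows essentially the same route as the paper's: the identical add-and-subtract decomposition of $P_1^{t+1}-P_2^{t+1}$ into a conditional-error term weighted by $P_1^t$ plus a kernel-contraction term, unrolled recursively to the shared initial distribution. Your remark about choosing the decomposition so that the conditional total-variation term is weighted by $P_1^t$ is exactly the point the paper's version also relies on.
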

\vspace*{-10pt}
\begin{proof}
Let us fix a state $s \in \cS$, and let $\bar{s} \in \cS$ denote a ``dummy'' state variable. Then,
\begin{align*}
\abs{P_1^t(s) - P_2^t(s)} 
& = \abs{\sum_{\bar{s} \in \cS} P_1(s|\bar{s}) P_1^{t-1}(\bar{s}) - \sum_{\bar{s}\in \cS} P_2(s|\bar{s}) P_2^{t-1}(\bar{s})} \\
& \leq \sum_{\bar{s} \in \cS} \abs{ P_1(s|\bar{s}) P_1^{t-1}(\bar{s}) -  P_2(s|\bar{s}) P_2^{t-1}(\bar{s}) } \\
& \leq \sum_{\bar{s} \in \cS} \abs{P_1^{t-1}(\bar{s}) \big( P_1(s|\bar{s}) - P_2(s|\bar{s}) \big)} + \abs{ P_2(s|\bar{s}) \big( P_1^{t-1}(\bar{s}) - P_2^{t-1}(\bar{s}) \big) }
\end{align*}
Using the above inequality, we have
\begin{align*}
2 D_{TV}(P_1^t, P_2^t) 
& = \sum_{s \in \cS} \abs{P_1^t(s) - P_2^t(s)} \\
& \leq \sum_{\bar{s} \sim \cS} P_1^{t-1}(\bar{s}) \sum_{s \in \cS} \abs{P_1(s|\bar{s}) - P_2(s|\bar{s})} + \sum_{\bar{s} \in \cS} \abs{P_1^{t-1}(\bar{s}) - P_2^{t-1}(\bar{s})} \\
& \leq 2 \epsilon + 2 D_{TV}(P_1^{t-1}, P_2^{t-1}) \\
& \leq 2 t \epsilon
\end{align*}
where the last step uses the previous inequality recursively till $t=0$, where the Markov chains have the same (initial) state distribution.
\end{proof}

\noindent The above lemma considers the error between two Markov chains. Note that fixing a policy in an MDP results in a Markov chain transition dynamics. Thus, fixing the policy, we can use the above lemma to compare the resulting Markov chains in $\world$ and $\model$. Consider the following definitions:
\begin{align*}
    \mu_\model^\policy (s,a) & = \frac{1}{T_\infty} \sum_{t=0}^{T_\infty} P(s_t=s, a_t=a) \\
    \mutil_\model^\policy (s,a) & = (1-\gamma) \sum_{t=0}^\infty \gamma^t P(s_t=s, a_t=a)
\end{align*}
The first distribution $\mu_\model^\policy$ is the average state visitation distribution when executing $\policy$ in $\model$, and $T_\infty$ is the episode duration (could tend to $\infty$ in the non-episodic case). The second distribution $\mutil_\model^\policy$ is the discounted state visitation distribution when executing $\policy$ in $\model$. Let $\mu_\world^\policy$ and $\mutil_\world^\policy$ be their analogues in $\world$. When learning the dynamics model, we would minimize the prediction error under $\mu_\world^\policy$, while $\perf(\policy, \world)$ is dependent on rewards under $\mutil_\world^\policy$. Let 
\[
\mu_\world^{\policy, t}(s,a) = P(s_t=s, a_t=a)
\]
be the marginal distribution at time $t$ when following $\policy$ in $\world$. Let $\mu_\model^{\policy, t}(s,a)$ be analogously defined when following $\policy$ in $\model$. Using these definitions, we first characterize the difference in performance of the same policy $\policy$ under $\world$ and $\model$.

\begin{lemma}
\label{lemma:performance_diff}
(Performance difference due to model error) Let $\world$ and $\model$ be two different MDPs differing only in their transition dynamics -- $P_\world$ and $P_\model$. Let the absolute value of rewards be bounded by $R_{\max}$. Fix a policy $\policy$ for both $\world$ and $\model$, and let $P_\world^t$ and $P_\model^t$ be the resulting marginal state distributions at time $t$. If the MDPs are such that
\[
\bE_{(s,a) \sim \mu_\world^{\policy,t}} \left[ D_{TV} \big( P_\world(\cdot|s,a), P_\model(\cdot|s,a) \big) \right] \leq \epsilon \ \ \forall t
\]
then, the performance difference is bounded as:
\[
\abs{\perf(\policy, \world) - \perf(\policy, \model)} \leq \frac{2 \gamma \epsilon R_{\max}}{(1-\gamma)^2}
\]
\end{lemma}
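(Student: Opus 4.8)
The plan is to bound the performance difference $\abs{\perf(\policy,\world) - \perf(\policy,\model)}$ by first controlling the divergence between the discounted state-action visitation distributions $\mutil_\world^\policy$ and $\mutil_\model^\policy$, and then using the boundedness of rewards to convert a total-variation bound on distributions into a bound on expected returns. The key observation is that $\perf(\policy,\world) = \frac{1}{1-\gamma}\bE_{(s,a)\sim\mutil_\world^\policy}[\cR(s)]$ and likewise for $\model$, so that
\[
\abs{\perf(\policy,\world) - \perf(\policy,\model)} = \frac{1}{1-\gamma}\abs{\bE_{\mutil_\world^\policy}[\cR(s)] - \bE_{\mutil_\model^\policy}[\cR(s)]} \leq \frac{2R_{\max}}{1-\gamma} D_{TV}\big(\mutil_\world^\policy, \mutil_\model^\policy\big),
\]
using observation 3 from the proof of Lemma~1 (the $\abs{\bE_{P_1}[f] - \bE_{P_2}[f]} \leq 2f_{\max}D_{TV}(P_1,P_2)$ bound) with $f_{\max} = R_{\max}$.

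\textbf{The remaining task} is to show $D_{TV}\big(\mutil_\world^\policy, \mutil_\model^\policy\big) \leq \frac{\gamma\epsilon}{1-\gamma}$. First I would apply Lemma~\ref{lemma:error_amplification} with the two Markov chains being $\policy$ executed in $\world$ and in $\model$ (fixing the policy turns each MDP into a Markov chain, and the hypothesis $\bE_{\mu_\world^{\policy,t}}[D_{TV}(P_\world(\cdot|s,a),P_\model(\cdot|s,a))] \leq \epsilon$ gives exactly the per-step condition needed, after averaging over actions), yielding $D_{TV}(\mu_\world^{\policy,t}, \mu_\model^{\policy,t}) \leq \epsilon t$ for every $t$. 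Then, since $\mutil_\world^\policy = (1-\gamma)\sum_{t=0}^\infty \gamma^t \mu_\world^{\policy,t}$ and similarly for $\model$, I would use convexity/triangle inequality of $D_{TV}$ over the mixture to get
\[
D_{TV}\big(\mutil_\world^\policy, \mutil_\model^\policy\big) \leq (1-\gamma)\sum_{t=0}^\infty \gamma^t D_{TV}(\mu_\world^{\policy,t}, \mu_\model^{\policy,t}) \leq (1-\gamma)\sum_{t=0}^\infty \gamma^t \epsilon t = (1-\gamma)\epsilon \cdot \frac{\gamma}{(1-\gamma)^2} = \frac{\gamma\epsilon}{1-\gamma}.
\]
Combining with the reward bound gives $\abs{\perf(\policy,\world) - \perf(\policy,\model)} \leq \frac{2R_{\max}}{1-\gamma}\cdot\frac{\gamma\epsilon}{1-\gamma} = \frac{2\gamma\epsilon R_{\max}}{(1-\gamma)^2}$, as claimed.

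\textbf{The main obstacle} I anticipate is getting the marginal-at-$t$ condition in Lemma~\ref{lemma:error_amplification} to line up cleanly with the hypothesis of this lemma: the lemma is stated for state-only Markov chains, whereas here we have state-action visitation. This is resolved by noting that the action is drawn from $\policy(\cdot|s)$ in both MDPs, so the state-action marginal divergence is controlled by the state marginal divergence (they share the conditional $\policy$), and the per-step transition divergence averaged over $a\sim\policy$ is at most $\epsilon$ by Jensen / the given hypothesis. A secondary subtlety is justifying the interchange of $D_{TV}$ with the infinite discounted mixture, which follows from the fact that $D_{TV}$ is a norm (half the $\ell_1$ distance) and hence convex, plus dominated convergence for the tail; I would state this as a one-line remark rather than belabor it. The arithmetic identity $\sum_{t\geq 0}\gamma^t t = \gamma/(1-\gamma)^2$ is the only genuine computation and is standard.
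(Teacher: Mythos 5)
Your proposal is correct and follows essentially the same route as the paper's proof: bound the return gap by $\frac{2R_{\max}}{1-\gamma}D_{TV}\big(\mutil_\world^\policy,\mutil_\model^\policy\big)$, control the per-timestep marginal divergence via Lemma~\ref{lemma:error_amplification} (after averaging the transition-kernel TV over $a\sim\policy$ and noting the state-action marginals share the conditional $\policy(a|s)$), and sum the resulting arithmetico-geometric series $\sum_t \gamma^t \epsilon t = \gamma\epsilon/(1-\gamma)^2$. The subtleties you flag (state-only vs.\ state-action marginals, convexity of $D_{TV}$ over the discounted mixture) are exactly the ones the paper handles, and you resolve them the same way.
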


\begin{proof}
Recall that the performance of a policy can be written as:
\[
\perf(\policy, \model) = \frac{1}{1-\gamma} \bE_{\mutil_\world^\policy} \left[ \cR(s) \right] = \bE \left[ \sum_{t=0}^\infty \gamma^t \cR(s_t) \right]
\]
where the randomness for the second term is due to $\model$ and $\policy$. We can analogously write $\perf(\policy, \world)$ as well. Thus, the performance difference can be bounded as:
\begin{align*}
\abs{\perf(\policy, \world) - \perf(\policy, \model)} & = \abs{\frac{1}{1-\gamma} \bE_{\mutil_\world^\policy} \left[ \cR(s) \right] -  \frac{1}{1-\gamma} \bE_{\mutil_\model^\policy} \left[ \cR(s) \right]} \\
& \leq \frac{2 R_{\max}}{1-\gamma} D_{TV} \big( \mutil_\world^\policy, \mutil_\model^\policy \big)
\end{align*}
Also recall that we have
\[
\mu_\model^{\policy, t}(s,a) = P(s_t=s, a_t=a) = P_\model^t(s) \policy(a|s)
\]
We can bound the discounted state visitation distribution as
\begin{align*}
    2 D_{TV} \big( \mutil_\world^\policy, \mutil_\model^\policy \big)
    & = \sum_{s, a} \abs{\mutil_\world^\policy(s,a) - \mutil_\model^\policy(s,a)} \\
    & = (1-\gamma) \sum_{s,a} \abs{ \sum_t \gamma^t \mu_\world^{\policy, t}(s,a) - \gamma^t \mu_\model^{\policy, t}(s,a) } \\
    & \leq (1-\gamma) \sum_{s,a} \sum_t \gamma^t \abs{\mu_\world^{\policy, t}(s,a) - \mu_\model^{\policy, t}(s,a)} \\
    & = (1-\gamma) \sum_s \sum_t \gamma^t \abs{P_\world^t(s) - P_\model^t(s)} \\
    & \leq (1-\gamma) \sum_{t=0}^\infty \gamma^t \left( 2 t \epsilon \right)
\end{align*}
where the last inequality uses Lemma~\ref{lemma:error_amplification}. Notice that the final summation is an arithmetico-geometric series. When simplified, this results in
\[
D_{TV} \big( \mutil_\world^\policy, \mutil_\model^\policy \big) \leq (1-\gamma) \frac{\epsilon \gamma}{(1-\gamma)^2} \leq \frac{\epsilon \gamma}{1-\gamma}
\]
Using this bound for the performance difference yields the desired result.
\end{proof}

\noindent {\bf Remarks:} The performance difference (due to model error) lemma we present is quite distinct and different from the performance difference lemma from \citet{Kakade2002CPI}. Specifically, our lemma bounds the performance difference between the {\em same policy} in two {\em different models.} In contrast, the lemma from \citet{Kakade2002CPI} characterizes the performance difference between two {\em different policies} in the {\em same model.}
\vspace*{5pt}

\noindent Finally, we study the global performance guarantee when we have a policy-model pair close to equilibrium.

{\bf Theorem 1 restated.}
{\em
(Global performance of equilibrium pair) Suppose we have policy-model pair $(\policy, \model)$ such that the following conditions hold simultaneously:
\[
\loss(\model, \mu_\world^{\policy,t}) \leq \epsilon_\world \ \forall t \ \text{ and } \ \perf(\policy, \model) \geq \sup_{\policy'} J(\policy', \model) - \epsilon_\policy.
\]
Let $\policy^*$ be an optimal policy so that $\perf(\policy^*, \world) \geq \perf(\policy', \world) \ \forall \policy'$. The performance is bounded as
\[
\perf(\policy^*, \world) - \perf(\policy, \world) \leq \frac{2 \gamma \sqrt{\epsilon_\world} R_{\max}}{(1-\gamma)^2} + \epsilon_\policy + \frac{2 R_{\max}}{1-\gamma} D_{TV} \left(\mutil_\world^{\policy^*}, \mutil_\model^{\policy^*}\right).
\]
}
\vspace*{-10pt}
\begin{proof}
We first simplify the performance difference, and subsequently bound the different terms. Let $\policy_\model^*$ to be an optimal policy in the model, so that $\perf(\policy_\model^*, \model) \geq \perf(\policy', \model) \ \forall \policy'$. We can decompose the performance difference due to various contributions as:
\begin{align*}
    \perf(\policy^*, \world) - \perf(\policy, \world)
    & = \perf(\policy^*, \world) - \perf(\policy^*, \model) + \perf(\policy^*, \model) - \perf(\policy, \world) \\
    & = \underbrace{\perf(\policy^*, \world) - \perf(\policy^*, \model)}_{\mathrm{Term-I}} + 
    \underbrace{\perf(\policy^*, \model) - \perf(\policy, \model)}_{\mathrm{Term-II}} +
    \underbrace{\perf(\policy, \model) - \perf(\policy, \world)}_{\mathrm{Term-III}} \\
\end{align*}
\vspace*{-35pt}

\noindent Let us first consider {\bf Term-II}, which is related to the sub-optimality in the planning problem. Notice that we have:
\[
\perf(\policy^*, \model) - \perf(\policy, \model) = \left( \perf(\policy^*, \model) - \perf(\policy_\model^*, \model) \right) + \left( \perf(\policy_\model^*, \model) - \perf(\policy, \model) \right) \leq 0 + \epsilon_\policy
\]
We have $ \perf(\policy^*, \model) - \perf(\policy_\model^*, \model) \leq 0$ since $\policy_\model^*$ is the optimal policy in the model, and we have $\perf(\policy_\model^*, \model) - \perf(\policy, \model) \leq \epsilon_\policy$ due to the approximate equilibrium condition.
\vspace*{10pt}

\noindent For {\bf Term-III}, we will draw upon the model error performance difference lemma (Lemma~\ref{lemma:performance_diff}). Note that the equilibrium condition of low error along with Pinsker's inequality implies
\[
\bE_{s \sim \mu_\world^{\policy, t}} \ \left[ D_{TV}\big( P_\world(\cdot|s,a), P_\model(\cdot|s,a) \big) \right] \leq \sqrt{\epsilon_\world}
\]
Using this and Lemma~\ref{lemma:performance_diff}, we have 
\[
\perf(\policy, \model) - \perf(\policy, \world) \leq \frac{2 \gamma \sqrt{\epsilon_\world}  R_{\max}}{(1-\gamma)^2}
\]

\noindent Finally, {\bf Term-I} is a transfer learning term that measures the error of $\model$ (which has low error under $\policy$) under the distribution of $\policy^*$. The performance difference can be written as
\begin{align*}
\perf(\policy^*, \world) - \perf(\policy^*, \model) & = \frac{1}{1-\gamma} \bE_{(s,a) \sim \mutil_\world^{\policy^*}} \left[ \cR(s) \right] -  \frac{1}{1-\gamma} \bE_{(s,a) \sim \mutil_\model^{\policy^*}} \left[ \cR(s) \right] \\
& \leq \frac{2 R_{\max}}{1-\gamma} D_{TV} \big( \mutil_\world^{\policy^*}, \mutil_\model^{\policy^*} \big)
\end{align*}
Putting all the terms together, we have
\[
\perf(\policy^*, \world) - \perf(\policy, \world) \leq 
\frac{2 R_{\max}}{1-\gamma} D_{TV} \left(\mutil_\world^{\policy^*}, \mutil_\model^{\policy^*}\right) +
\epsilon_\policy +
\frac{2 \gamma \sqrt{\epsilon_\world} R_{\max}}{(1-\gamma)^2}
\]
\end{proof}
\noindent {\bf Remarks:} Tighter bounds on the transfer learning term is not possible without additional assumptions. However, the spirit of the transfer learning issue is captured by the term.

\vspace*{-10pt}
\begin{enumerate}
    \item It suggests that there is a preference hierarchy between models that achieve similar low error under $\mu_\world^\policy$. The models that can simulate a wider class of policies (i.e. have better transfer) are preferable for MBRL. This establishes a concrete connection between MBRL and domain adaptation, and we hope that various ideas from transfer learning and domain adaptation~\citep{BenDavid2006AnalysisOR} can benefit MBRL.
    
    \item The structure of $\mu_\world^{\policy^*}$ provides avenues to achieve better transfer. Note that the start state distribution is the same for $\world$ and $\model$, and is also shared by all policies. Thus, if we could design or choose the start state distribution to be wide, it automatically ensures good mixing between $\mu_\world^{\policy^*}$ and $\mu_\model^{\policy^*}$ by virtue of them sharing the start state distribution. We could obtain such a distribution by training an exploratory policy~\citep{pathakICMl17curiosity, Hazan2018ProvablyEM}, and executing it for a few steps to construct a starting state distribution.
\end{enumerate}
\vspace*{-10pt}

\noindent The theorem assumes that the errors are small at each timestep: $\loss(\model, \mu_\world^{\policy,t}) \leq \epsilon_\world \ \forall t$. This is only a slightly stronger assumption than the average error being small. In practice, by executing the policy, we would have a dataset drawn from $\mu_\world^\policy$. Thus, it should be possible to make the error small under $\mu_\world^\policy$. Recall that $\mu_\world^\policy = (1/T_\infty) \sum_{t=0}^{T_\infty} \mu_\world^{\policy, t}$. If we use an expressive function approximator, and if there is sufficient concentration of measure, small error over $\mu_\world^\policy$ would lead to small error at each timestep. Furthermore, since we typically store time-indexed trajectories, we can check in practice that the error is small at each timestep.

\newpage
\section{Algorithm Implementation Details and Experiments}
\label{appendix:algorithm}

Our implementation builds on top of MJRL (\url{https://github.com/aravindr93/mjrl}) for NPG~\cite{Rajeswaran17nips, Rajeswaran-RSS-18} and interfacing with MuJoCo/OpenAI-gym~\cite{mujoco12, gym}. We adapt the NPG implementation to work with learned models. Our model learning minimizes one-step prediction error using Adam. We first describe the details of these subroutines before describing the full algorithms.

{\bf Policy Details} \ We represent the policy as a neural network, and use the learned model for performing synthetic rollouts as specified in Subroutine~\ref{alg:npg}. For the set of initial states, we can either sample from the initial state distribution of MDP (if it is known) or keep track of initial states from the environment in a separate initial state replay buffer. We found both to perform near-identically. Furthermore, the synthetic rollouts can be started from either the initial state distribution of the MDP, or from intermediate states in real rollouts. We found starting 50\% of synthetic rollouts from intermediate (real-world) rollout states leads to better asymptotic results for longer horizon gym tasks. This is consistent with prior works that suggest sampling from a wide initial state distribution is beneficial for policy gradient methods~\cite{Kakade2002CPI, Rajeswaran17nips}. 

The subroutine is written assuming a reward oracle, which can either be a known function, or can be learned from data. We found both settings to work near-identically, since rewards are often simple functions of the state-action and substantially easier to learn than dynamics. We consider a maximum rollout horizon of 500, which can become shorter if the maximum environment horizon is smaller, or if termination conditions kick in for the rollouts. If the environments have termination conditions, we enforce these for the synthetic rollouts as well. Finally, we use a baseline/value network for the purposes of variance reduction~\cite{Greensmith2001VarianceRT, Wu2018Variance} -- specifically GAE~\cite{GAE}. We use the default values for most parameters as summarized in Table~\ref{table:npg_params}, and do not tune them.

\begin{subroutine}[h!]
  \caption{Model-Based Natural Policy Gradient Update Step}
  \label{alg:npg}
\begin{algorithmic}[1]
\STATE {\bf Require:} Policy (stochastic) network $\policy_\theta$, value/baseline network $V_\psi$, ensemble of MDP dynamics models $\{ \model_\phi \}$, reward function $\cR$, initial state distribution or buffer.
\STATE {\bf Hyperparameters:} Discount factor $\gamma$, GAE $\lambda$, number of trajectories $N_\traj$, rollout horizon $\horizon$, normalized NPG step size $\delta$
\STATE Initialize trajectory buffer $\cD_\traj = \{ \}$
\FOR{$k = 1, 2, \ldots, N_\traj$}
    \STATE Sample initial state $s_0^k$ from initial state distribution/buffer
    \STATE Perform $\horizon$ step rollout from $s_0^k$ with $\policy_\theta$ to get $\traj^k_j = (s_0^k, a_0^k, s_1^k, a_2^k, \ldots s_H^k, a_H^k)$, one for each model $\model_\phi^j$ in the ensemble.
    \STATE Query reward function to obtain rewards for each step of the trajectories
    \STATE Truncate trajectories if termination/truncation conditions are part of the environment
    \STATE Aggregate the trajectories in trajectory buffer, $\cD_\traj = \cD_\traj \cup \{ \traj \}$
\ENDFOR
\STATE Compute advantages for each trajectory using $V_\psi$ and GAE~\cite{GAE}.
\STATE Compute vanilla policy gradient using the dataset 
\[
g = \bE_{(s,a) \sim \cD_\traj} \left[ \nabla_\theta \log \policy_\theta(a|s) A^\policy(s,a) \right]
\]
\vspace*{-15pt}
\STATE Perform normalized NPG update ($F$ denotes the Fisher matrix)
\[
\theta = \theta + \sqrt{\frac{\delta}{g^T F^{-1} g}} \ F^{-1} g
\]
\vspace*{-10pt}
\STATE Update value/baseline network $V_\psi$ to fit the computed returns in $\cD_\traj$.
\STATE {\bf Return} Policy network $\policy_\theta$, value network $V_\psi$
\end{algorithmic}
\end{subroutine}

\paragraph{Model details}

We model the MDP dynamics with ensembles of neural network dynamics models. Ensembles capture epistemic uncertainty~\cite{PETS} and provide robustness for policy optimization~\cite{Rajeswaran2016EPOpt}. We are provided with a dataset of tuples $\cD = \{ (s_t, a_t, s_{t+1}) \}$, and we parameterize the model as:
\[
\model_\phi(s_t, a_t) = s_t + \sigma_\Delta \  MLP_\phi \left( \frac{s_t - \mu_s}{\sigma_s}, \frac{a_t - \mu_a}{\sigma_a} \right)
\]
where we $\Delta_t = s_{t+1}-s_t$ are the state differences, and mean centering and scaling are performed based on the dataset. We solve the following optimization problem to learn the parameters:
\[
\min_\phi \ \ \bE_{(s_t, a_t, s_{t+1}) \sim \cD} \left[ \left\| (s_{t+1} - s_t) - \sigma_\Delta \ MLP_\phi \left( \frac{s_t - \mu_s}{\sigma_s}, \frac{a_t - \mu_a}{\sigma_a} \right) \right\|^2 \right].
\]
We specify the important hyperparameters along with PAL and MAL descriptions. When training, we also ensure that at-least $10^2$ gradient steps and at-most $10^5$ gradient steps are used, to avoid boundary issues when the buffer size is too small or large.

\begin{table}[h!]
\centering
\caption{Hyperparameters used for policy improvement with NPG}
\begin{tabular}{|c|c|}
\hline
\textbf{Parameter}            & \textbf{Value}                      \\ \hline
Policy network                & MLP (64, 64)                        \\
Value/baseline network        & MLP (128, 128)                      \\
Discount $\gamma$             & 0.995                               \\
GAE $\lambda$                 & 0.97                                \\
\# synthetic trajectories $(N_\traj)$ & 200                                 \\
Rollout horizon $(H)$         & min (env-horizon, 500, termination) \\
normalized step size $\delta$ & 0.05                                \\ \hline
\end{tabular}
\label{table:npg_params}
\end{table}

{\bf Policy As Leader: } The practical version of the PAL-NPG algorithm is provided in Algorithm~\ref{alg:pal_practical}. The algorithm alternates between collecting a small amount of data in each iteration, learning a dynamics model, and conservatively improving the policy. We use a small replay buffer to aggregate data from the past few iterations, but the replay buffer is kept small in size to ensure that the model is primarily trained to be accurate under current state visitation.

\begin{algorithm}[h!]
  \caption{Policy As Leader (PAL) -- Practical Version}
  \label{alg:pal_practical}
\begin{algorithmic}[1]
\STATE {\bf Initialize:} Policy network $\policy_0$, model network(s) $\model_0$, value network $V_0$.
\STATE {\bf Hyperparameters:} Initial samples $N_{init}$, samples per update $N$, buffer size $B \approx N$, number of NPG steps $K \approx 1$
\STATE {\bf Initial Data:} Collect $N_{init}$ samples from the environment by interacting with initial policy. Store data in buffer $\cD$.
\FOR{$k = 0, 1, 2, \ldots$}
    \STATE Learn dynamics model(s) $\model_{k+1}$ using data in the buffer.
    \STATE Policy updates: $\policy_{k+1}, V_{k+1} =$ \texttt{Model-Based NPG}$(\policy_k, V_k, \model_{k+1})$ \texttt{// call K times}
    \STATE Collect dataset of $N$ samples from world by interacting with $\policy_{k+1}$. Add data to replay buffer $\cD$, discarding old data if size is larger than $B$.
\ENDFOR
\end{algorithmic}
\end{algorithm}

For hyperparameter selection, we performed a coarse search on DClaw task and used the same parameters for the remaining tasks with minor changes. The main parameters we focused on were the number of NPG updates per iteration, for which we tried $K=\{1, 2, 4, 8\}$ and found $4$ to be best. Similarly, we studied number of samples per iteration $N = \{ 1, 5, 10, 20 \} \times $env-horizon, and found $5$ to be ideal for DClaw, DKitty, Reacher, and Hand tasks. For the hand task, $N=10\times$horizon produced more stable results, and we report this in the paper. The OpenAI gym tasks are longer horizon and we found fewer samples are sufficient. For the gym tasks, we use $N=1000$ samples per iteration, which towards the later half of training often amounts to only one trajectory. We use a buffer of size $B=2500$, which often amounts to using data from the past 2-5 iterations. We also use ensembles of dynamics models and we tried ensemble sizes of $\{1, 2, 4, 8\}$ and found $4$ to be a good trade-off between performance and computation. We also found it important to initialize the policy with sufficient small amount of exploratory noise to avoid stability and divergence issues. We consider Gaussian policies with diagonal covariance where the neural network parameterizes the mean, and the diagonal co-variance is also learned. We initialize the standard deviation as $\sigma = \exp(-1)$. We do not add any additional exploratory noise when collecting data, but simply use the learned covariance in the Gaussian policy. We summarize the details in Table~\ref{table:pal_params}.

\begin{table}[h!]
\caption{Hyperparameters used for the PAL-NPG algorithm}
\centering
\begin{tabular}{|c|c|}
\hline
\textbf{Parameter}              & \textbf{Value}                    \\ \hline
Model network                   & MLP (512, 512)                    \\
Learning algorithm              & Adam (default parameters)         \\
No. of epochs                   & 100        \\
Mini-batch size                 & 200        \\
Ensemble size                   & 4                                 \\
Buffer size $B$                 & 2500                              \\
Initial samples $(N_{init})$    & 2500                              \\
Samples per iteration $(N)$     & min(5$\times$env-horizon, 1000)   \\
NPG updates $(K)$ & 4                                 \\ \hline
\end{tabular}
\label{table:pal_params}
\end{table}

{\bf Model As Leader: } The practical version of the MAL-NPG algorithm is provided in Algorithm~\ref{alg:pal_practical}. The algorithm alternates between optimizing a policy using current model, collecting additional data which is aggregated into a data buffer, and finally improving the model using the aggregated data.

\begin{algorithm}[h!]
  \caption{Model As Leader (MAL) -- Practical Version}
  \label{alg:pal_practical}
\begin{algorithmic}[1]
\STATE {\bf Initialize:} Policy network $\policy_0$, model network(s) $\model_0$, value network $V_0$.
\STATE {\bf Hyperparameters:} Initial samples $N_{init}$, samples per update $N$, number of NPG steps $K \gg 1$
\STATE {\bf Initial Data:} Collect $N_{init}$ samples from the environment by interacting with initial policy. Store data in buffer $\cD$.
\STATE {\bf Initial Model:} Learn model(s) $\model_0$ using data in $\cD$.
\FOR{$k = 0, 1, 2, \ldots$}
    \STATE Optimize $\policy_{k+1}$ using $\model_k$ by running $K \gg 1$ steps of model-based NPG (Subroutine~\ref{alg:npg}).
    \STATE Collect dataset $\cD_{k+1}$ of $N$ samples from world using $\policy_{k+1}$. Aggregate data $\cD = \cD \cup \cD_{k+1}$.
    \STATE Learn dynamics model(s) $\model_{k+1}$ using data in $\cD$.
\ENDFOR
\end{algorithmic}
\end{algorithm}

For hyperparameter selection, we follow the same overall approach as described in MAL. Compared to PAL, the main differences are that a larger number of initial samples are required, since the policy is optimized aggressively. We tried $K=\{ 10, 25, 40, 60 \}$ and found $K=25$ to be a good trade-off between performance and computation. We tried $N = \{ 1, 5, 10, 20 \} \times $env-horizon and found $N = 20 \times$horizon to provide the best results. For the OpenAI gym tasks, we used $N=3000$ samples. For the simpler Pendulum task, we use fewer samples which still leads to stable results. We again use an ensemble of 4 models. The hyperparameter details are summarized in Table~\ref{table:mal_params}.

\begin{table}[h!]
\caption{Hyperparameters used for the MAL-NPG algorithm}
\centering
\begin{tabular}{|c|c|}
\hline
\textbf{Parameter}              & \textbf{Value}                    \\ \hline
Model network                   & MLP (512, 512)                    \\
Learning algorithm              & Adam (default parameters)         \\
No. of epochs                   & 10                                \\
Mini-batch size                 & 200                               \\
Ensemble size                   & 4                                 \\
Buffer size $B$                 & $\infty$                          \\
Initial samples $(N_{init})$    & 5000                              \\
Samples per iteration $(N)$     & min(20$\times$env-horizon, 3000)  \\
NPG updates $(K)$               & 25                                \\ \hline
\end{tabular}
\label{table:mal_params}
\end{table}

\clearpage

\subsection{Task Suite}
\label{appendix:tasks}

The main tasks we study are \texttt{DClaw-Turn}, \texttt{DKitty-Orient}, \texttt{7DOF-Reacher}, and \texttt{InHand-Pen}. 
\begin{enumerate}
    \item The \texttt{DClaw-Turn} task requires a 3 fingered ``DClaw'' to rotate a faucet to a desired orientation (see illustration below). The observations consist of the joint positions and velocities of the claw as well as the faucet; in addition to the desired valve orientation. The reward measures the closeness between the current faucet configuration and desired configuration. For further details about the task, see Ahn et al.~\cite{Kumar_ROBEL} (task \texttt{DClawTurnRandom-v0})..
    \begin{figure}[h!]
        \centering
        \includegraphics[width=0.9\textwidth]{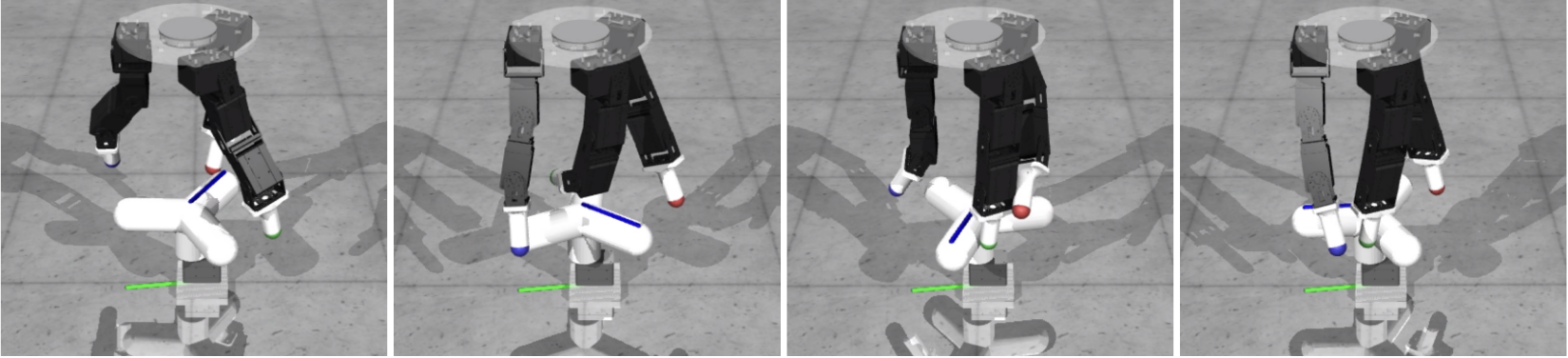}
    \end{figure}
    \item The \texttt{DKitty-Orient} task requires a quadruped (DKitty) to change its orientation in order to face in a desired direction (as illustrated below). The observations consist of the pose and velocity of various joints in the robot, and the desired orientation. The reward measures the difference between current pose of the robot and the desired pose. For further details about the task, see Ahn et al.~\cite{Kumar_ROBEL} (task \texttt{DKittyOrientRandom-v0}).
    \begin{figure}[h!]
        \centering
        \includegraphics[width=0.9\textwidth]{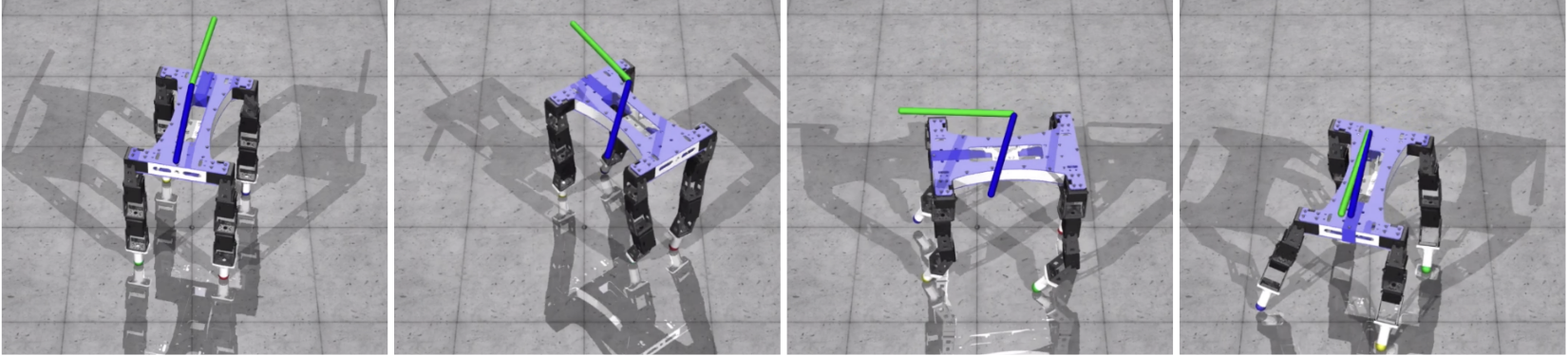}
    \end{figure}
    \item The \texttt{7DOF-Reacher} reacher task requires a 7DOF robot arm (corresponding to a Sawyer robot) to reach various spatial goals with its end effector (finger tip). The observations consist of the joint pose and velocities of the arm and the desired location for the end effector. The reward measures the distance between the end effector and the goal.
    \begin{figure}[h!]
        \centering
        \includegraphics[width=0.9\textwidth]{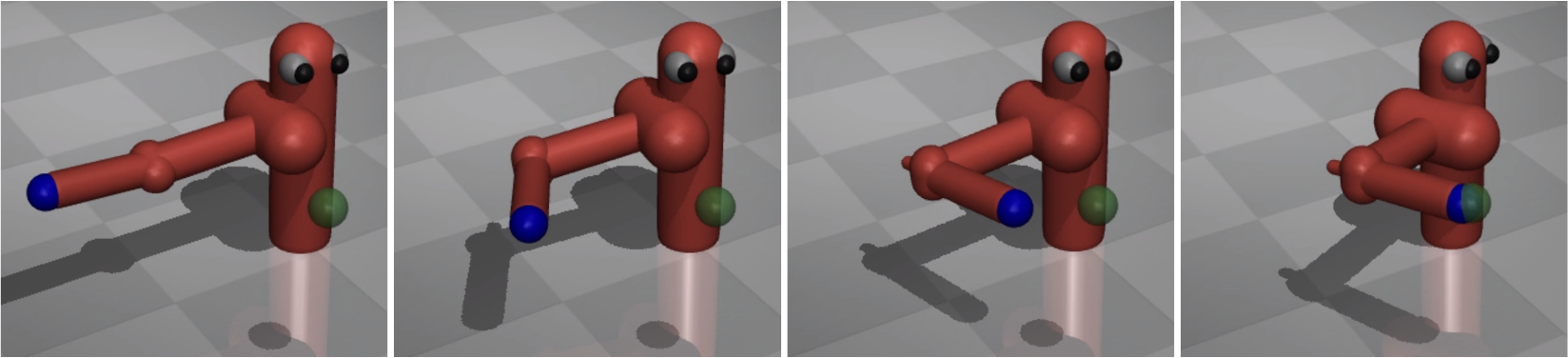}
    \end{figure}
    \item Finally, the \texttt{InHand-Pen} task requires a 24DOF dexterous hand to manipulate a pen in-hand to point in a desired orientation. The observations consist of the joint pose and velocity for the hand and pen, and the desired pose for the pen. The reward measures the difference between the pose of the pen and the desired pose. For additional details about the task, see Rajeswaran et al.~\cite{Rajeswaran-RSS-18} (task \texttt{pen-v0}).
    \begin{figure}[h!]
        \centering
        \includegraphics[width=0.9\textwidth]{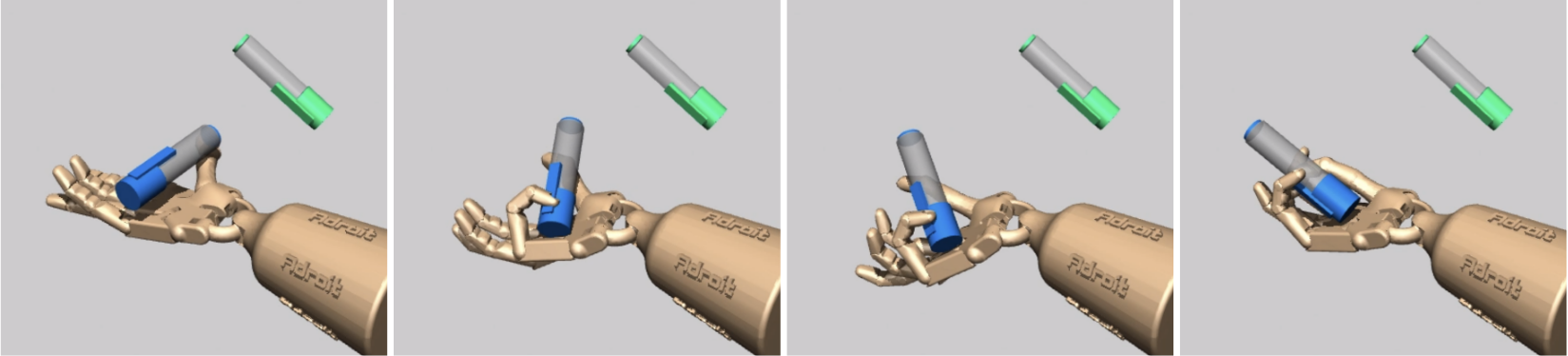}
    \end{figure}
\end{enumerate}

\subsection{Results on OpenAI gym benchmarks}
\label{appendix:gym_results}

\begin{figure}[b!]
    \centering
    \includegraphics[width=\textwidth]{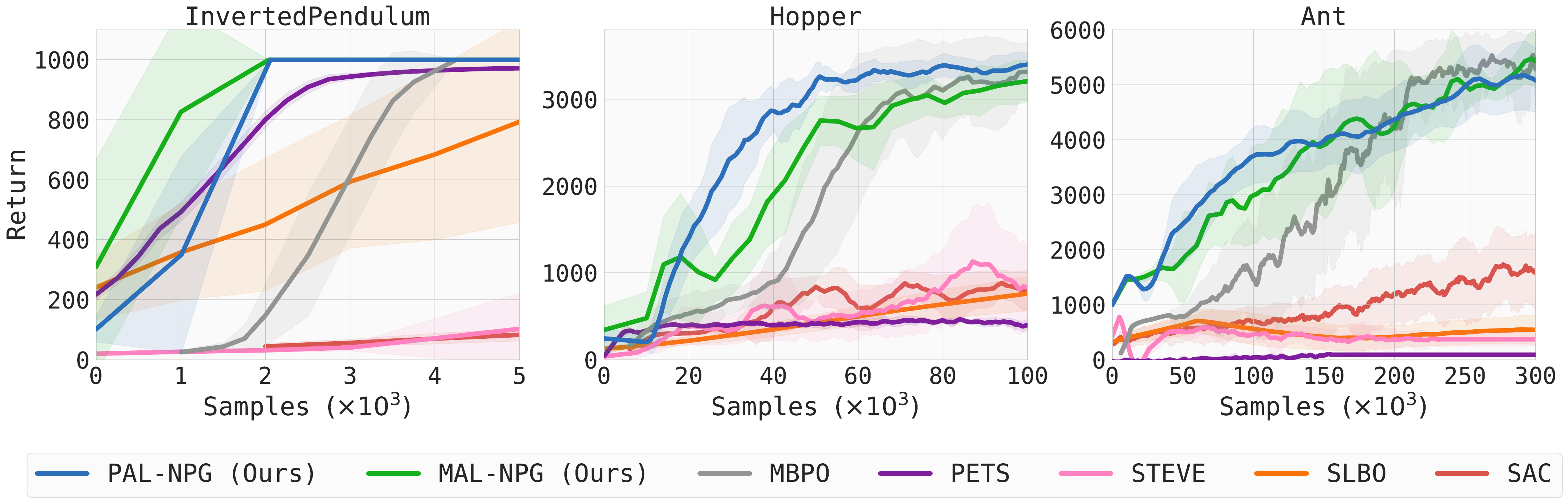}
    \vspace*{-10pt}
    \caption{Comparison of results on the OpenAI gym benchmark tasks. Results for the baselines are reproduced from \citet{MBPO}. We observe that PAL and MAL have stable near-monotonic improvement, and substantially outperform the baselines.}
    \label{fig:gym_results_appendix}
\end{figure}

We also benchmark the performance of PAL-NPG and MAL-NPG in the OpenAI gym benchmarks~\cite{gym}. Specifically, we consider three tasks: \texttt{InvertedPendulum-v2}, \texttt{Hopper-v2}, and \texttt{Ant-v2}. For baselines, we consider MBPO~\cite{MBPO}, PETS~\cite{PETS}, STEVE~\cite{STEVE}, SLBO~\cite{Xu2018AlgorithmicFF}, and SAC~\cite{Haarnoja2018SoftAA}. A subset of these algorithms were considered for benchmarking deep RL in the recent work of Wang et al.~\cite{Wang2019BenchmarkingMR}. MBPO is the current state of the art model-based method, and SAC is a state of the art model-free algorithm. The hyperparameters for PAL and MAL are specified in Tables 1-3 and related discussion. The results are presented in Figure~\ref{fig:gym_results_appendix}. We find that our methods substantially outperform the baselines. In particular, compared to state of the art MBPO, our method is nearly twice as efficient in InvertedPendulum and Hopper. Our methods are nearly ${10\times}$ as efficient as other baselines.

In the hopper and ant tasks, we include the velocity of center of mass in the observation space in order to be able to compute the rewards for synthetic rollouts. Similar approaches are followed in prior works as well, e.g. in SLBO~\cite{Xu2018AlgorithmicFF}.

Finally, we note that MBPO is a hybrid model-based and model-free method, while our PAL and MAL implementations are entirely model-based. In MBPO, it was noted that long horizon model-based rollouts were unstable and combining with an off-policy critic was important. We find that through our Stackelberg formulation, which is intended to carefully control the effects of distribution shift, we are able to perform rollouts of hundreds of steps without error amplification. As a result, even though our algorithms are purely model based, they can achieve sample efficient learning without loss in asymptotic performance. It is straightforward to extend our PAL and MAL approaches to the hybrid model-based and model-free setting, and could likely lead to a further increase in efficiency for some tasks. We leave exploration of this to future work.

\subsection{Model Error Amplification}

\begin{figure}[t!]
    \centering
    \includegraphics[width=0.95\textwidth]{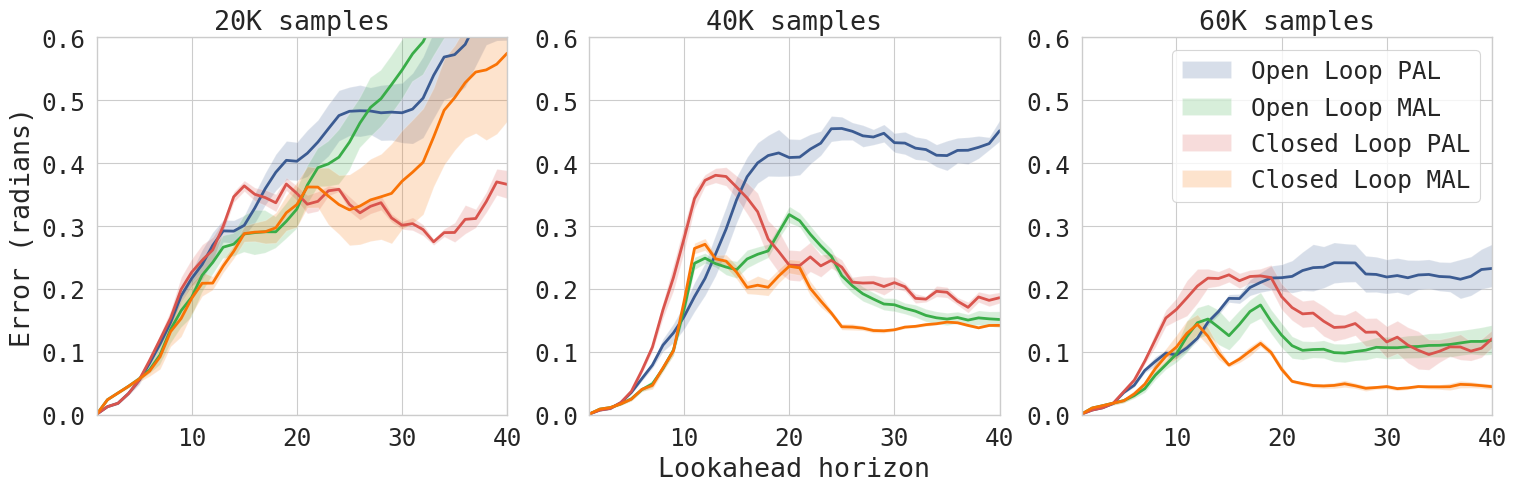}
    \vspace*{-10pt}
    \caption{Error amplification over look-ahead horizon in the DClawTurnRandom task. In this experiment, we measure the error of policy $\policy_k$ under model $\model_k$ for various stages of training (20K, 40K, and 60K samples). The x-axis is the look-ahead horizon, and the y-axis is the error in prediction of the valve orientation (in radians). See main text for explanation of open loop and closed loop. We observe that: (a) errors decrease with more training; (b) closed loop error is smaller than open loop error; (c) initially PAL has lower error, but finally MAL achieves lower error; (d) long term error is small indicating that the policy-model combination captures the semantics of the task. Note that prediction error of $\approx 0.1$ radians in the long term is small.
    }
    \label{fig:error_amplification}
\end{figure}

While the 1-step (prediction) generalization error is easy to measure, it does not provide direct intuitions about the model quality for the purpose of policy improvement. We study error amplification over lookahead horizon to better understand the quality of model for purposes of policy improvement. Let $s_0$ be the initial state for both $\world$ and $\model$. We wish to measure $L(t) = \bE [ \| s_t^\world - s_t^\model \| ]$ where $s_t^\world$ and $s_t^\model$ are obtained by following the dynamics of $\world$ and $\model$ respectively. The state evolution depends on actions, and for this we consider two modes: open loop and closed loop.

In {\bf open loop} mode, we first sample an initial state and set it for both $\world$ and $\model$, i.e. $s_0^\world = s_0^\model$. Subsequently, we execute $\policy$ in $\world$ in obtain a trajectory. The action sequence is then executed in open-loop in $\model$. Specifically, this makes $a_t^\model = a_t^\world = \policy(s_t^\world)$.
In {\bf closed loop} mode, we again sample the initial state and set $s_0^\world = s_0^\model$. Subsequently, we collect trajectory by indipendetly executing the policy, so that we have: $a_t^\world = \policy(s_t^\world)$ and $a_t^\model = \policy(s_t^\model)$.

We study the error for the DClaw task, and plot the error in prediction of faucet angle. This is the primary quantity of interest, since the task involves turning the faucet to the desired orientation. The results are presented in Figure~\ref{fig:error_amplification}. We make the following observations:
\vspace*{-10pt}
\begin{enumerate}
\itemsep0em
    \item With more training, the entire profile of errors reduce. This is encouraging, since it suggests that the model quality improves with more training.
    \item Closed loop prediction errors are smaller than open loop errors. This suggests that the policy shapes the state visitation to regions where the model is more accurate. Thus, the algorithms we consider ensure that the policy and model remain compatible.
    \item During initial stages of training, PAL has lower error. However, towards the end of training, MAL learns more accurate models, by improving the model quality at a faster rate. This is likely due to MAL maintaining a larger replay buffer with more diverse set of transitions obtained by executing very different policies over the course of training. This further underscores why MAL can handle non-stationarities in task and goal distribution as outlined in the main paper.
    \item The error does not strictly increase with time. In particular, we observe profiles where the error shrinks towards the end of the horizon. As the policy improves, it turns the faucet to the desired configuration with greater probability. Thus, the long term consequences of the policy are in fact more easily predictable than the intermediate transient effects. This further suggests that the policy-model pair together capture the semantics of the task.
\end{enumerate}

\end{document}